\patchcmd{\@maketitle}{\LARGE \@title}{\fontsize{14}{17.2}\selectfont\@title}{}{}
\DeclareMathOperator*{\argmax}{argmax}
\DeclarePairedDelimiterX{\infdivx}[2]{(}{)}{%
  #1\;\delimsize\|\;#2%
}
\newcommand{\norm}[1]{\left\lVert#1\right\rVert}
\newtheorem{theorem}{Theorem}
\newtheorem{assumption}{Assumption}
\newenvironment{assumption*}
 {\ifnum\value{subassumption}=0 \stepcounter{assumption}\fi\subassumption}
 {\endsubassumption}
\newenvironment{assumption+}[1]
 {\subassumption}
 {\endsubassumption}
\newtheorem{lemma}{Lemma}
\newtheorem{corollary}{Corollary}
\newtheorem*{theorem-non}{Theorem}
\newcommand{\github}[1]{%
   \href{#1}{\faGithubSquare}%
}
\title{\textbf{Collaborative and Distributed Bayesian Optimization via Consensus: \\ Showcasing the Power of Collaboration for Optimal Design}}
\date{\vspace{-7ex}}
\author[1]{Xubo Yue}
\author[3]{Yang Liu}
\author[2]{Albert S. Berahas}
\author[3]{Blake N. Johnson}
\author[2,*]{Raed Al Kontar}
\affil[1]{Northeastern University, Boston}
\affil[2]{University of Michigan, Ann Arbor}
\affil[3]{Virginia Tech, Blacksburg}
\affil[*]{Corresponding author: alkontar@umich.edu}
\begin{document}
\maketitle

\begin{abstract}

Optimal design is a critical yet challenging task within many applications. This challenge arises from the need for extensive trial and error, often done through simulations or running field experiments. Fortunately, sequential optimal design, also referred to as Bayesian optimization when using surrogates with a Bayesian flavor, has played a key role in accelerating the design process through efficient sequential sampling strategies. However, a key opportunity exists nowadays. The increased connectivity of edge devices sets forth a new collaborative paradigm for Bayesian optimization. A paradigm whereby different clients collaboratively borrow strength from each other by effectively distributing their experimentation efforts to improve and fast-track their optimal design process. To this end, we bring the notion of consensus to Bayesian optimization, where clients agree (i.e., reach a consensus) on their next-to-sample designs. Our approach provides a generic and flexible framework that can incorporate different collaboration mechanisms. In lieu of this, we propose transitional collaborative mechanisms where clients initially rely more on each other to maneuver through the early stages with scant data, then, at the late stages, focus on their own objectives to get client-specific solutions. Theoretically, we show the sub-linear growth in regret for our proposed framework. Empirically, through simulated datasets and a real-world collaborative sensor design experiment, we show that our framework can effectively accelerate and improve the optimal design process and benefit all participants.

\end{abstract}

\newpage

\section{Introduction}

The success of many real-world applications critically depends on trial \& error. Often the goal is to manipulate a set of variables, called \textit{designs}, to achieve a desired outcome or system response.  For instance, material scientists perform time-consuming and expensive experiments to determine optimal compositions \citep{zhang2023rapid} that produce a material with desired properties. 
Similarly, additive manufacturers must calibrate many design parameters, including laser power, beam diameter, and hatching pattern through trial \& error \citep{buchanan2019metal} so that their product matches its intended shape. Those studies, whether through experimentation or simulations like finite element models, consume resources (e.g., time and budget) that can significantly limit progress.


Fortunately, sequential design has played a key role in accelerating the design process \citep{le2015cokriging,  kusne2020fly, huang2021bayesian}. In the engineering and statistical literature, sequential design can be categorized into two types \citep{ezzat2018sequential}: (i) \textit{sequential design for exploration}, that aims to explore a response surface, (ii) \textit{sequential design for optimization}, that aims to find designs that optimize a target response. Our work focuses on sequential design for optimization, known as sequential optimal design. This field has also been coined more recently as Bayesian optimization (BO) when using surrogates with a Bayesian flavor (e.g., Gaussian processes - $\mathcal{GP}$s). Hereon, we will use the term Bayesian optimization since, without loss of generality, we exploit $\mathcal{GP}$s in the process. 

Rather than running exhaustive brute-force experiments over a dense grid, BO employs a sequential strategy to conduct experiments and observe new samples. It first fits a surrogate that estimates the design-response relationship from existing data. Afterward, a utility, also known as acquisition, function \citep{gramacy2009adaptive,overstall2017bayesian, lee2018sequential} is defined to hint at the benefits of sampling/experimenting at new design points. Based on this utility metric, the next-to-sample design point is selected. The procedure is sequentially iterated over several rounds till budget constraints or exit conditions are met. Needless to say, sequential optimal design/BO has been extensively studied \citep{antony2014design, shahriari2015taking, kleijnen2018design} and has found success in a wide variety of disciplines across physics, chemistry, mechanical engineering \citep{ekstrom2019bayesian, kusne2020fly,zhang2020bayesian,rahman2021predicting,wang2022bayesian}, amongst many others. 

In lieu of the aforementioned successes, this work aims to bring BO to a collaborative paradigm. The main question we ask is: \textbf{How can multiple clients collaborate to improve and fast-track their design processes}? With today's advances in computation and communication power at edge devices \citep{kontar2021internet}, it has become more plausible for potentially dispersed clients to share information, distribute trial \& error efforts, and fast-track the design process so that all participants gain benefit. Here, collaborating clients can be scientists, robots, multiple finite element simulations, etc.

However, to enable collaboration, key challenges exist. First, and most importantly, is how to distribute the sequential optimization process. While BO has been extensively studied in the past decades, few literature exists on collaborative BO. The second challenge is heterogeneity. Despite trying to optimize similar processes, clients may operate under different external factors and conditions. As such, retaining client-specific optimal solutions is of importance. The third challenge is privacy. To encourage participation, a collaborative process should refrain from sharing client-specific outcomes.

In an effort to address the opportunity and stated challenges above, we summarize our contributions below: 

\begin{itemize}
    \item \textbf{Bayesian optimization via Consensus:} We bring the notion of ``consensus'' to BO where clients perform experiments locally and agree (i.e., reach a consensus) on their individualized next-to-sample designs. Our approach provides a generic and flexible framework that can incorporate different collaboration mechanisms. It hinges upon a consensus matrix that evolves with iterations and flexibly determines the next-to-sample designs across all clients. 
    Some interesting by-products of our framework are:
    \begin{enumerate}
        \item Amenability to full decentralization that respects communication restrictions and allows clients to perform the collaborative process without the orchestration of a central entity. 
        \item Ability to only share designs and not client-specific outcomes to preserve some privacy.
        \item Amenability to any utility/acquisition function of choice.
    \end{enumerate}
  
    \item \textbf{Regret minimization:} We support our proposed collaborative framework with a theoretical justification. We show that, under some mild conditions, with a high probability, the cumulative regret for each client has a sublinear growth rate. This implies that our collaborative algorithm can bring clients within regions of optimal designs.     
    \item \textbf{The power of collaboration for optimal sensor design:} We perform an experiment for accelerated optimization of sensor design through collaborative finite element analysis (FEA) workflows. The experiment showcases the ability of collaboration to significantly outperform its non-collaborative counterpart.
    
\end{itemize}

The remainder of this paper is organized as follows. In Sec. \ref{sec:motivation}, we present a motivating example that drives this work. We then briefly discuss related literature in Sec. \ref{sec:lit}. In Sec. \ref{sec:colab}, we formulate the problem mathematically and provide our proposed collaborative BO framework. Sec. \ref{sec:exp} and Sec. \ref{sec:real} showcase the benefits of our approach through several simulation studies and an experiment on accelerated sensor design.  We conclude our paper with a brief discussion about open problems in this relatively new field in Sec. \ref{sec:con}.

\section{Motivation}
\label{sec:motivation}

Our work is motivated by optimal sensor design. Biosensors are critical bioanalytical technologies that enable the selective detection of target analytes and have broad applications ranging from medical diagnostics, health monitoring, food and water safety, and environmental monitoring \citep{cesewski2020electrochemical}.  A biosensor is defined as a device that is based on an integrated biorecognition element and transducer \citep{thevenot2001electrochemical}.  There are two main categories of biosensors: (1) device-based biosensors, and (2) methods-based biosensors (e.g., nanobiosensors) \citep{johnson2014biosensor}.  Device-based biosensors are often integrated with microfluidics and exhibit form factors that can be physically integrated with and removed from processes (e.g., thin-film, dip-stick) and preserve the characteristics of the sample to be analyzed. Alternatively, methods-based biosensors based on solutions and suspensions of functionalized particles provide detection by mixing with a sample, and thus, are relatively destructive with respect to the sample. Thus, device-based biosensors are commonly used in a ``flow-and-measure” or ``dip-and-measure” format. In contrast, methods-based biosensors, such as nanobiosensors, are typically used in a “mix-and-measure" format.

While biosensors have been created for sensitive and selective detection of many target analytes ranging from small molecules, proteins, nucleic acids, cells, biomarkers, and pathogens, there remains a need to further optimize biosensor design (e.g., form factor, functionalization) and performance (e.g., sensitivity, measurement confidence, and speed) to meet the constraints and requirements of industrial and commercial applications \citep{carpenter2018blueprints}.  Current thrust areas in biosensor design and performance optimization can be categorized as driven by experimentation, simulation, data analytics, or combinations thereof \citep{selmi2017optimization, polatouglu2020novel,koveal2022high, zhang2023reduction, zhang2024improving}.  For example, high-throughput experimentation, sensor arrays, FEA, and data-driven biosensing have been leveraged to improve the understanding, design, and performance of biosensors, particularly device-based biosensors whose design, utility, and performance are often linked to the characteristics of experimental measurement formats, such as microfluidic channel design, flow field parameters (e.g., flow rate), and other parameters of the measurement format (e.g., sample injection time). However, it remains a challenge to optimize biosensor performance, particularly in maximizing the amount of target analyte detected, given the high-dimensionality of the design space associated with biosensor design, functionalization (e.g., concentration of immobilized biorecognition elements), and measurement format parameters. In particular, there remains a need to establish closed-loop self-driving workflows for engineering high-performance biosensors, such as by optimizing the biosensor design, functionalization protocols, and measurement formats that synergize with experimentation, simulation, and machine learning. Given the current limitation, our central hypothesis is:

\textit{Hypothesis: Collaboration can accelerate the pace of optimal sensor design and yield optimal design with minimal resource expenditures.}

Fig. \ref{fig:motivation} provides a microcosm of our collaborative solution, which we have tested in Sec. \ref{sec:real}. Our experiment features multiple collaborating agents (FEA simulations) that perform biosensor design. Then they will coordinate to decide on their next simulations. Our goal is to rapidly discover the biosensor design and measurement format parameters that find the the maximum amount of captured target analyte.

\begin{figure}[!htbp]
    \centering
    \centerline{\includegraphics[width=0.8\columnwidth]{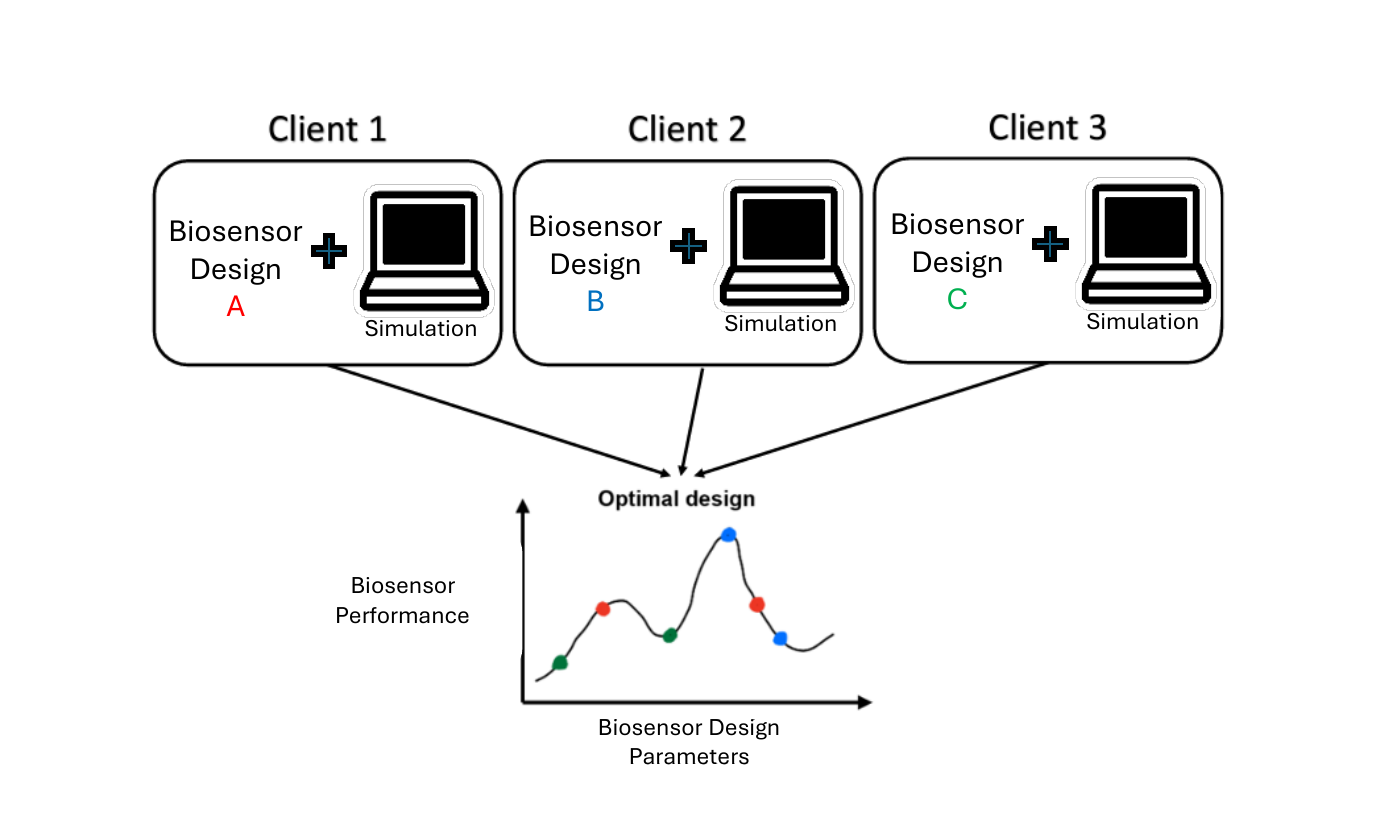}}
    \caption{Illustration of closed-loop biosensor and measurement format  design optimization driven by consensus BO-driven simulation. }
    \label{fig:motivation}
\end{figure}

\section{Related Work}
\label{sec:lit}

Optimal design has a longstanding history. Traditionally, the major trend for optimal design was fixed-sample designs, where the best design is chosen from a pre-determined set. Perhaps the earliest departures from fixed-sample designs date back to a few years following World War II, where \cite{friedman1947planning, box1951experimental,  robbins1951stochastic, bellman1956problem, chernoff1959sequential} proposed schemes to sequentially decide on next-to-sample designs to eventually reach the optimal design. Amongst them were the experiments done by E.P Box and co-workers \cite{box1951experimental} at the Imperial Chemical Industries, aimed at improving the yield of chemical processes. Afterward, two seminal papers from \cite{kushner1964new, sacks1970statistical} proposed deciding on the next-to-sample design points through optimizing utility functions driven by surrogates that can quantify uncertainty. This led to an era of development in BO that remains very active till this day.    



Recent advances in BO include but are not limited to: (i) \textbf{Alternative surrogate models:} instead of using $\mathcal{GP}$ surrogates, \cite{marmin2022deep, ming2022deep, sauer2022active} suggest using Deep $\mathcal{GP}$s as an alternative. This credits to many desirable advantages of Deep $\mathcal{GP}$s, such as flexibility to non-stationarity and robustness in handling abrupt regime changes in the training data. Besides deep $\mathcal{GP}$s, Bayesian neural networks \citep{snoek2015scalable, springenberg2016bayesian} have also been widely adopted as surrogates.  (ii) \textbf{Multi-objective BO:} where the goal is to simultaneously optimize multiple, often competing, responses  \citep{konomi2014bayesian, svendsen2020active}. Here surrogates that simultaneously learn the multiple outputs, such as the Multi-output $\mathcal{GP}$ \citep{kontar2018nonparametric}, are often exploited and developed. (iii) \textbf{Multi-fidelity BO:} when data is collected across multiple fidelities, it becomes crucial to choose the fidelity to use when running an experiment. Along this line, \cite{le2015cokriging, he2017optimization, stroh2022sequential} have developed various sequential strategies to tackle this challenge. (iv) \textbf{New utility functions:} developing new utility functions remains one of the most active areas in optimal design. Along this line, recent literature has investigated look-ahead utilities that chose a design based on its utility over a rolling future horizon \citep{jiang2020binoculars, lee2020efficient, yue2020non}.


Though BO has caught major attention over the past century, to our knowledge, little literature exists on collaborative BO. Perhaps the two closest fields are batch sequential design (or batch BO) and federated BO. In batch BO, multiple designs are chosen from a common surrogate and utility. These designs are then distributed across multiple compute nodes to be evaluated in parallel \citep{azimi2010batch,azimi2012hybrid, wu2016parallel,duan2017sliced, hunt2020batch}. Unfortunately, batch BO is designed for parallel computing, where it focuses on optimizing a single objective using a single surrogate learned from a centralized dataset. Therefore, it cannot handle cases where data comes from  diverse and potentially heterogeneous sources, nor does it preserve privacy. All data needs to be agglomerated in one place to learn the surrogate and optimize the utility. In a similar fashion, federated BO \citep{dai2020federated,dai2021differentially} tries to distribute experiments decided from a single objective. However, they do so while preserving privacy. For instance, \cite{dai2020federated} share function realizations of the posterior belief using random Fourier features to learn the common surrogate. While preserving privacy, federated BO tries to circumvent its inherent homogeneity assumptions by reducing collaboration after some time, yet such a method is limited to a Thomson-sampling utility. 


\section{The Collaborative BO Framework}
\label{sec:colab}

\subsection{Setting the Stage}
We start by describing our problem setting and introducing notation. Assume there are $K\geq 2$ clients, and each client has a budget of $T$ experiments across $T$ iterations. Denote by $t\in\{0,1\ldots,T-1\}$ the iteration index. Clients can communicate with each other either via a central orchestrator or direct communication (See Sec. \ref{subsec:collaborative}). 

Each client $k\in[K]\coloneqq\{1,\ldots K\}$ has an initial dataset $\mathcal{D}^{(0)}_k=\{\bm{X}^{(0)}_k,\bm{y}^{(0)}_k\}$ with $N^{(0)}_k$ observations, where $\bm{X}^{(0)}_k=\{\bm{x}_{k,1},\ldots,\bm{x}_{k,N^{(0)}_k}\}$ is a $D\times N^{(0)}_k$ design matrix that contains the initials designs $\bm{x}_{k, \cdot} \in \mathbb{R}^D$ and $\bm{y}^{(0)}_k=(y_{k,1},\ldots,y_{k,N^{(0)}_k})^\intercal$ is an $N^{(0)}_k\times 1$ vector that contains the corresponding observed responses. The goal of each client is to find a set of client-specific optimal designs 
\begin{align}
    \bm{x}^*_k=\argmax_{\bm{x}\in\mathcal{X}\subseteq\mathbb{R}^D} f_{k}(\bm{x}) \, , \notag
\end{align}

\noindent where $\mathcal{X}$ is a subset in $\mathbb{R}^D$, and $f_{k}: \mathbb{R}^D \rightarrow \mathbb{R}$ is the true unknown design function each client aims to optimize. Clearly, using first or second-order optimization algorithms is not feasible since $f_k$ is a black-box. To observe $f_k(\cdot)$ at a new design point $\bm{x}_k^{\text{new}}$ one needs to run an experiment and observe $y_{k}(\bm{x}_k^{\text{new}})$, that is possibly a noisy representation of $f_{k}(\bm{x}_k^{\text{new}})$; $y_k=f_k+\epsilon_k$, where $\epsilon_k$ is an additive noise. Therefore, the goal is to carefully decide on the next-to-sample design $\bm{x}_k^{\text{new}}$ so that an optimal design is reached with the fewest experiments possible.

To do so, at any time $t$, BO resorts to a utility function $U$, $U(y_k(\bm{x}); \mathcal{D}^{(t)}_k): \mathbb{R}^D\to\mathbb{R}$, that quantifies the benefits gained if one were to conduct an experiment at a new design, $\bm{x}_k^{(t)\text{new}}$ \citep{gramacy2009adaptive,konomi2014bayesian,amo2016optimal, overstall2017bayesian, lee2018sequential}. 


In a non-collaborative environment, each client $k$ chooses the next-to-sample design by maximizing their own utility. However, since the utility is dependent on the response $y_k$, one cannot calculate the utility except at previously observed designs. Here BO resorts to a surrogate $\hat{y}_k$ that estimates $\hat{y}_k(\bm{x})$ for any $\bm{x} \in \mathbb{R}^D $ using the dataset $\mathcal{D}_k^{(t)}$. Such surrogates are often probabilistic (e.g., $\mathcal{GP}$ \citep{sacks1989design, currin1991bayesian}) and are capable of providing a predictive distribution $\mathbb{P}_{\hat{y}_k(\bm{x})|\mathcal{D}^{(t)}_k}$ over $\hat{y}_k(\bm{x})$. Equipped by the surrogate, client $k$, now chooses the next-to-sample design by maximizing the expected utility 
\begin{align} \label{eq:single}
    \bm{x}_k^{(t)\text{new}} = \argmax_{\bm{x}} \mathbb{E}_{\mathbb{P}_{\hat{y}_k|\mathcal{D}^{(t)}_k}}\left[U(\hat{y}_k(\bm{x}); \mathcal{D}^{(t)}_k)\right].
\end{align}
Hereon, for notation brevity, we write the expected utility in \eqref{eq:single} as $\mathbb{E}_{{\hat{y}_k|\mathcal{D}^{(t)}_k}}\left[U(\hat{y}_k(\bm{x}))\right]$. It is also worth noting that expectation in \eqref{eq:single} with respect to $\hat{y}_k$ is sometimes replaced with $\hat{f}_k$ where  $\hat{y}_k=\hat{f}_k+\epsilon_k$, depending on the type of utility function.  See Sec. \ref{subsec:context} for some examples. 


\subsection{Collaborative BO via Consensus}
\label{subsec:collaborative}

Now in a collaborative framework, given an arbitrary $t$, we aim to allow clients to collaboratively decide on $\{\bm{x}_k^{(t)\text{new}}\}_{k=1}^K$. A natural idea is to maximize utility across all clients. This translates to the following problem: 
\begin{align}
\label{eq:original}
\max_{\{\bm{x}_k\}_{k=1}^K}\mathbb{E}_k\left[\mathbb{E}_{\hat{y}_k|\mathcal{D}^{(t)}_k}\left[U(\hat{y}_k(\bm{x}_k))\right]\right]= \max_{\{\bm{x}_k\}_{k=1}^K}\sum_{k=1}^Kp_k\left[\mathbb{E}_{\hat{y}_k|\mathcal{D}^{(t)}_k}\left[U(\hat{y}_k(\bm{x}_k))\right]\right],
\end{align}
where $p_k$ is some weight coefficient for client $k$ with $\sum_{k=1}^Kp_k=1$. Without loss of generality, hereon we assume $p_k=\frac{1}{K}$. As shown, the key difference of (\ref{eq:single}) from (\ref{eq:original}) is taking the expectation $\mathbb{E}_{k}$ over all participants. However, in itself, (\ref{eq:original}) does not allow entities to borrow strength from each other as it can be fully decoupled across the $K$ entities. 

In order to enable collaboration and distribute experiments across the $K$ clients, we bring the notion of ``consensus'' \citep{shi2017distributed, berahas2018balancing} to BO. In the context of BO, consensus allows clients to agree (reach a consensus) on their individualized next-to-sample designs through a consensus matrix. Specifically, we modify \eqref{eq:original} to 
\begin{align}
\label{eq:consensus2}
&\bm{x}^{(t)}_k=[\bm{x}^{(t)}_{\mathcal{C}}]_k=\argmax_{\bm{x}_k}\left[\mathbb{E}_{\hat{y}_k|\mathcal{D}^{(t)}_k}\left[U(\hat{y}_k(\bm{x}_k))\right]\right]
&\quad \text{and}\quad \bm{x}_k^{(t)\text{new}} = \left[(\bm{W}^{(t)}\otimes\bm{I}_D)\bm{x}^{(t)}_{\mathcal{C}}\right]_k\,,  
\end{align}
where $[\cdot]_k$ represents the $k^{th}$ block of a vector, $\bm{x}^{(t)}_\mathcal{C} =  [\bm{x}^{(t)\top}_1,  \bm{x}^{(t)\top}_2,  \cdots, \bm{x}^{(t)\top}_K]^\top$ is concatenation of the designs across all clients, $\bm{W}^{(t)}$ is a consensus matrix of size $K\times K$,
\begin{align*}
    \bm{W}^{(t)} =
    \begin{bmatrix}
         w^{(t)}_{11} & w^{(t)}_{12} & \cdots & w^{(t)}_{1K} \\ w^{(t)}_{21} & w^{(t)}_{22} & \cdots & w^{(t)}_{2K} \\ \vdots & \vdots & \ddots & \vdots \\ w^{(t)}_{K1} & w^{(t)}_{K2} & \cdots & w^{(t)}_{KK}
    \end{bmatrix},
\end{align*}
$\bm{I}_D$ is a $D\times D$ identity matrix, and $\otimes$ denotes the Kronecker product operation, i.e., $\bm{W}^{(t)} \otimes \bm{I}_D$ results in a matrix of size $DK \times DK$. The matrix $\bm{W}^{(t)}$ is a symmetric, doubly stochastic matrix (i.e., $\sum_k w^{(t)}_{kj} = \sum_j w^{(t)}_{kj}=1$ for $j,k \in [K]$) with non-negative elements. 

The new objective \eqref{eq:consensus2} has several interesting features. For the sake of compactness, unless necessary, we drop the superscript $t$ in the subsequent discussion. First, the formulation presented in \eqref{eq:consensus2} is indeed reminiscent of recent optimization approaches coined as consensus optimization. To see this, notice that a doubly stochastic matrix $\bm{W}$ has the property that $(\bm{W}\otimes\bm{I}_D)\bm{x}_{\mathcal{C}}=\bm{x}_{\mathcal{C}}$ if and only if $\bm{x}_k=\bm{x}_j$ for all $k,j \in [K]$~\citep{nedic2009distributed}. As such, if we enforce $(\bm{W}\otimes\bm{I}_D)\bm{x}_{\mathcal{C}}=\bm{x}_{\mathcal{C}}$ as a constraint, we can solve  
\begin{align}
\label{eq:consensus}
\max_{\bm{x}_k} \frac{1}{K} \sum_{k=1}^K\left[\mathbb{E}_{\hat{y}_k|\mathcal{D}_k}\left[U(\hat{y}_k(\bm{x}_k))\right]\right] \quad \text{subject to}\quad \bm{x}_k=\bm{x}_j, \,\,\, \forall k,j \in [K].  
\end{align}
The equality constraint (often referred to as the consensus constraint) is imposed to enforce that local copies at every client are equal.  
That said, our formulation and setting have several distinguishing features that differentiate it from consensus optimization. The differences pertain to the goals of the two approaches and are related to the consensus matrix $\bm{W}$. In consensus optimization, the goal is for all clients to eventually agree on a common decision variable. However, in our collaborative BO paradigm, we do not want all clients to make the same decisions. Rather, we want clients to borrow strength from each other while at the same time allowing for personalized (per client) solutions. Even when clients are homogeneous, enforcing the constraint will significantly reduce exploration as everyone runs an experiment at the same location. Therefore, we do not explicitly enforce the consensus constraint. Instead, we allow $\bm{W}$ to play an aggregation role that decides on the next-to-sample designs $\bm{x}_k^{(t)\text{new}}$ given the current utility maximizers $\bm{x}_{\mathcal{C}}^{(t)}$. More importantly, $\bm{W}^{(t)}$ is time-varying and, in the limit, converges to the identity matrix (see below for more details and examples of the consensus matrices used). This allows clients to borrow strength in the initial stages of the optimization, yet, eventually, make personalized decisions.

Second, the consensus matrix $\bm{W}$ is doubly stochastic. By the Birkhoff–von Neumann theorem \citep{marshall1979inequalities}, any doubly stochastic matrix can be expressed as a convex combination of multiple permutation matrices. Mathematically, there exists $L$ non-negative scalars $\{\eta_l\}_{l=1}^L$ such that $\sum_{l=1}^L\eta_l=1$ and $\bm{W}=\sum_{l=1}^L\eta_l\bm{P}_l$. For example, 
\begin{align*}
    \bm{W}=\begin{bmatrix}0.2&0.3&0.5\\0.6&0.2&0.2\\0.2&0.5&0.3\end{bmatrix}=0.2\begin{bmatrix}0&1&0\\0&0&1\\1&0&0\end{bmatrix}+0.2\begin{bmatrix}1&0&0\\0&1&0\\0&0&1\end{bmatrix}+0.1\begin{bmatrix}0&1&0\\1&0&0\\0&0&1\end{bmatrix}+0.5\begin{bmatrix}0&0&1\\1&0&0\\0&1&0\end{bmatrix},
\end{align*}
where each permutation matrix $\bm{P}_l$ can be viewed as an allocator that assigns one design solution to one client. In this example, we can see that
\begin{align*}
    (\bm{P}_1\otimes\bm{I}_D)\bm{x}_{\mathcal{C}}=\left(\begin{bmatrix}0&1&0\\0&0&1\\1&0&0\end{bmatrix}\otimes\bm{I}_D\right)\begin{bmatrix}
         \bm{x}_1 \\ \bm{x}_2 \\ \bm{x}_3
    \end{bmatrix} =\begin{bmatrix}
         \bm{x}_2 \\ \bm{x}_3 \\ \bm{x}_1
    \end{bmatrix}.
\end{align*}
This implies that the first allocator assigns the solution from client 1 to client 2, and so forth. This is an interesting phenomenon that is related to batch BO, where a batch of candidate design points is selected, and then multiple experiments run in parallel, each corresponding to one candidate solution. Now, instead of sticking to one allocation strategy, the consensus approach can be viewed as a weighted average of all possible assignments, where the designed consensus matrix dictates the weights, hence the impact of clients on each other.  

Third, over the course of the optimization, the consensus step $(\bm{W}\otimes\bm{I}_D)\bm{x}_{\mathcal{C}}$ naturally yields $K$ solutions to all clients (i.e., one for each) so that clients can explore and exploit the solution space independently and in a distributed manner. For example, suppose $D=1$ and consider a consensus matrix $\bm{W}=\begin{bmatrix}0.7&0.3\\0.3&0.7\end{bmatrix}$ and two local solutions $\bm{x}_1=5, \bm{x}_2=7$. The consensus step will yield $\bm{W}\bm{x}_{\mathcal{C}}=(5.6,6.4)^\intercal$ such that client 1 will take solution $5.6$, and client 2 will take solution $6.4$.

Fourth, from the previous example, it can be seen that the consensus matrix $\bm{W}$ controls how much one client will affect the design choices for other clients. As a result, the matrix $\bm{W}$ adds a layer of flexibility in optimizing \eqref{eq:consensus2} and allows for heterogeneous clients. More specifically, in Sec. \ref{subsec:matrix}, we will show two approaches to design $\bm{W}^{(t)}$ at each iteration $t$ such that $\bm{W}^{(t)}\to\bm{I}$. The intuition is as follows. In the early stages, client $k$ may not have enough observations to obtain a high-quality surrogate model and therefore needs to borrow information from other clients. In the late stages, as client $k$ accumulates sufficient data and can construct a high-quality local surrogate model, it will focus more on its local design problem to find client-specific optimal design points.

Finally, the consensus constraint provides a naturally distributed approach to solve \eqref{eq:consensus2} sequentially. To see this, recall that each client can run $T$ experiments across $T$ iterations and the initial dataset is $\mathcal{D}_k^{(0)}$. Then a natural algorithm would take the following form: At iteration $t\in\{0,\ldots,T-1\}$, given $\bm{x}_{\mathcal{C}}^{(t)}$, all clients calculate $\bm{x}_k^{(t)\text{new}}=[(\bm{W}^{(t)}\otimes \bm{I}_D)\bm{x}_{\mathcal{C}}^{(t)}]_k$. Now, each client $k$ conducts an experiment at $\bm{x}_k^{(t)\text{new}}$, and augments its dataset with the new observation. Each client then fits a surrogate (e.g., $\mathcal{GP}$) using the current dataset $\mathcal{D}_k^{(t+1)}$ and accordingly constructs the utility function $U(\hat{y}_k(\bm{x}_k))$. Then the clients maximize their local utility  $\max_{\bm{x}_k}\mathbb{E}_{\hat{y}_k|\mathcal{D}^{(t+1)}_k}\left[U(\hat{y}_k(\bm{x}_k))\right]$ to obtain a new candidate solution $\bm{x}^{(t+1)}_k$. Finally, all clients send their solutions to each other or a central orchestrator, and the process repeats. 

Algorithm \ref{alg::consensus} summarizes our  Collaborative Bayesian Optimization via Consensus (\texttt{CBOC}) framework. Fig. \ref{fig:collaboration} presents a flowchart illustrating the collaborative BO. Notably, our collaborative framework enjoys some nice theoretical properties (See Sec. \ref{subsec:theory}). 


\vspace{0.5cm}
\begin{algorithm}[H]
	\caption{\texttt{CBOC}: Collaborative BO via Consensus}
	\label{alg::consensus}
		\SetAlgoLined
	\KwData{$T$ iterations, $K$ number of clients, initial data $\{\mathcal{D}^{(0)}_{k}\}_{k=1}^K$, initial consensus matrix $\bm{W}^{(0)}$, initial designs to share $\{\bm{x}_k^{(0)}\}_{k=1}^K$}
		\For{$t$ $= 0, 1, \cdots T-1$}{
        \For{$k=1,\cdots,K$}{
            {\bfseries Consensus:} client $k$ computes the consensus point $\bm{x}_k^{(t)\text{new}}=[(\bm{W}^{(t)}\otimes\bm{I}_D)\bm{x}_{\mathcal{C}}^{(t)}]_k$ \;
		{\bfseries Experiment:} client $k$ conducts experiment using $\bm{x}_k^{(t)\text{new}}$ and observes $y_k(\bm{x}_k^{(t)\text{new}})$\;
            {\bfseries Data Augmentation:} client $k$ augments dataset by $\mathcal{D}_k^{(t+1)}=\mathcal{D}_k^{(t)}\cup\{\bm{x}_k^{(t)\text{new}},y_k(\bm{x}_k^{(t)\text{new}})\}$ \;
            {\bfseries Surrogate Modeling:} client $k$ updates its surrogate model \;
            {\bfseries Optimization:} client $k$ finds their local utility maximizer $\bm{x}^{(t+1)}_k=\argmax_{\bm{x}_k}\mathbb{E}_{\hat{y}_k|\mathcal{D}^{(t+1)}_k}\left[U(\hat{y}_k(\bm{x}_k))\right]$ and shares this candidate sample \;
            }
}
\end{algorithm}
\vspace{0.5cm}

\begin{figure}[!htbp]
    \centering
    \centerline{\includegraphics[width=0.8\columnwidth]{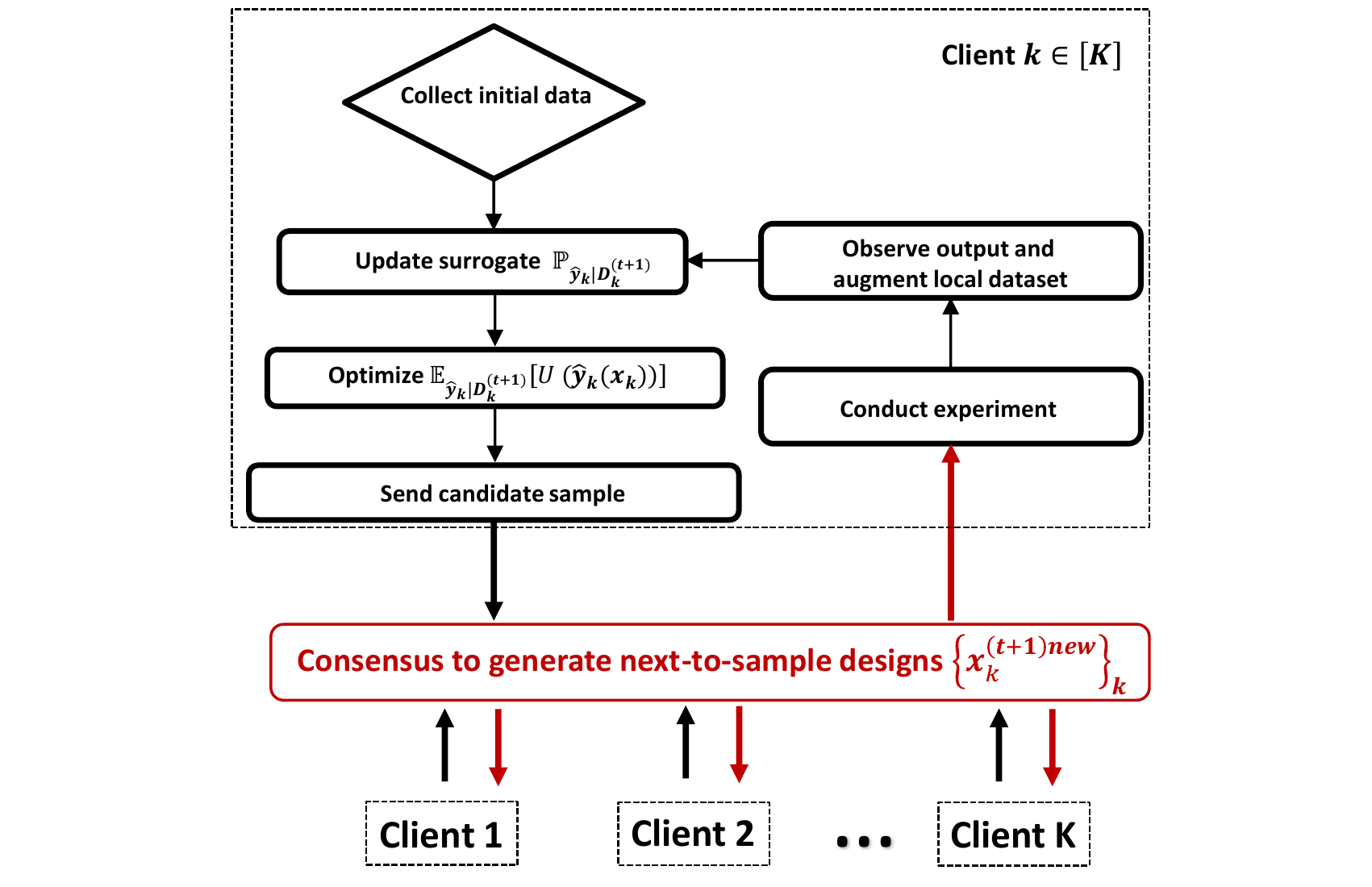}}
    \caption{Flowchart of CBOC.}
    \label{fig:collaboration}
\end{figure}

It is worthwhile noting that the collaborative process can be done in a centralized or decentralized manner where clients share $\bm{x}^{(t)}_k$ either with a central orchestrator or with each other directly. 

\subsection{Designing the Consensus Matrix}
\label{subsec:matrix}

From Algorithm \ref{alg::consensus}, it can be seen that one of the key components in \eqref{eq:consensus2} is the consensus matrix $\bm{W}$. In essence, as we have discussed earlier, the consensus matrix $\bm{W}$ controls how much one client will affect the design choices for other clients. Therefore, one needs to carefully design the consensus matrix.

In this section, we propose two approaches for designing the consensus matrix. The first approach assumes one does not have any prior information on different clients and uniformly adjusts all entries in $\bm{W}^{(t)}$ at each iteration $t$. The second approach carefully modifies the weights for each client based on the ``leader", where the ``leader'' is defined as the client that has observed the best improvement (e.g., the most significant improvement in the utility). Below, we will detail both approaches.


\paragraph{Uniform Transitional Matrix} We propose to assign equal weights to all clients at iteration $0$ and then gradually decay off-diagonal elements
\begin{align*}
    \bm{W}^{(0)}=\begin{bmatrix}\frac{1}{K}&\frac{1}{K}&\ldots&\frac{1}{K}\\ \vdots & \vdots & \vdots & \vdots \\ \frac{1}{K}&\frac{1}{K}&\ldots&\frac{1}{K}\end{bmatrix}.
\end{align*}
For simplicity, we assume all clients are connected. In practice, if clients $i$ and $j$ are not connected, we can set $w_{ij}=w_{ji}=0$ and reweigh the other components of the $\bm{W}$ matrix to ensure it is doubly stochastic.

\begin{figure}[!htbp]
    \centering
    \centerline{\includegraphics[width=0.6\columnwidth]{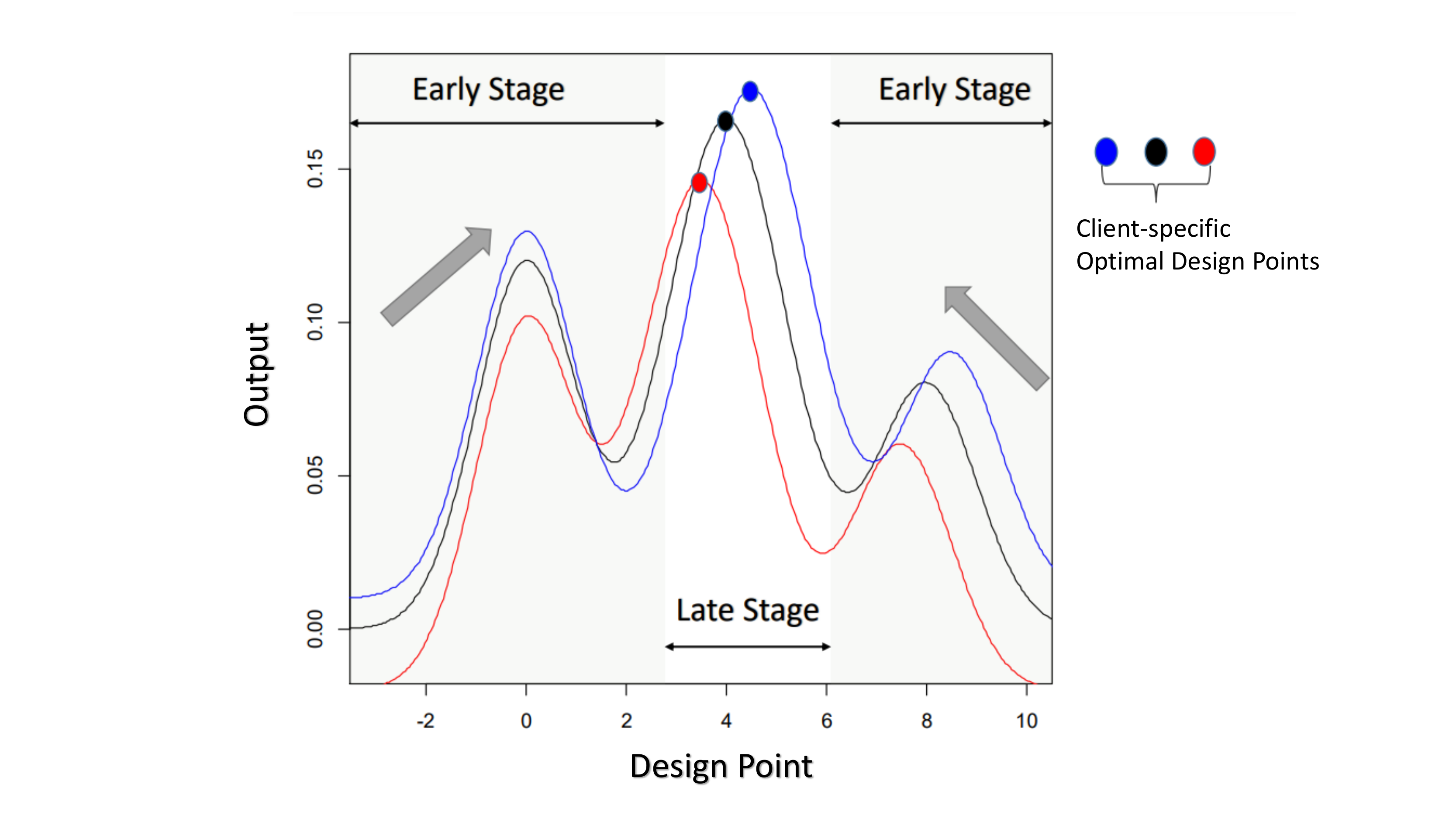}}
    \caption{Illustration of the collaborative process. Colors represent client-specific black-box functions.}
    \label{fig:change}
\end{figure}

As the optimization progresses, we gradually increase diagonal elements $w_{kk}$ to 1 and decrease the off-diagonal elements $w_{kj}$ for $k \neq j$ to 0. For example, if we adjust weights linearly, then
\begin{align}
\label{eq:W1}
    \bm{W}^{(t+1)}=\bm{W}^{(t)}+\begin{bmatrix}\frac{K-1}{TK}&-\frac{1}{TK}&\ldots&-\frac{1}{TK}\\ \vdots & \vdots & \vdots & \vdots \\ -\frac{1}{TK}&-\frac{1}{TK}&\ldots&\frac{K-1}{TK}\end{bmatrix}.
\end{align}
Ultimately, $\bm{W}^{(t)}$ will converge to the identity matrix $\bm{I}$. The intuition behind this design is illustrated in Fig. \ref{fig:change}. In this figure, each colored line represents a black-box design function for a client. In the early stages, client $k$ does not have enough observations to obtain a high-quality utility and therefore needs to borrow more information from other clients. As client $k$ has more data in the late stages, it will focus more on its own design problem to find client-specific optimal designs.

Note that $\bm{W}^{(t)} \rightarrow \bm{I}$, is mandatory if clients have some heterogeneity. To see this, assume all clients have recovered their optimal design $\bm{x}_k^*$, then if $\bm{W} \neq \bm{I}$, the consensus step will always move the experiment away from $\bm{x}_k^*$.


\paragraph{Leader-driven Matrix} Another approach is to adjust the weights of $\bm{W}$ dynamically based on the ``leader", i.e., the client that observed the best improvement. As a result, all other clients will follow the leader and explore the region that potentially contains the best solutions. Initially, we create two matrices $\bm{W}^{(0)}_1$ and $\bm{W}^{(0)}_2$, and assign equal weights to all entries of both matrices. Matrix $\bm{W}_1$ will be used as a baseline, and is updated using \eqref{eq:W1}. Matrix $\bm{W}_2$ will be modified based on $\bm{W}_1$ and by incorporating the leader information. The purpose of using $\bm{W}_1$ is to ensure that $\bm{W}_2$ still converges to $\bm{I}$ so that every single client will eventually focus on their own design objective. 

At iteration $t$, each client $k$ shares a pair $(\bm{x}^{(t)}_k,S^{(t)}_k)$, where $S_k$ is a reward that quantifies the gained benefit at client $k$. As a simple example, we could set $S^{(t)}_k=\max\mathbb{E}_{\hat{y}_k|\mathcal{D}^{(t)}_k}\left[U(\hat{y}_k(\bm{x}_k))\right]$. We then sort $S^{(t)}_k$ and find $k^{*(t)}=\argmax_k\{S^{(t)}_1,S^{(t)}_2,\ldots,S^{(t)}_K\}$, and treat client $k^{*(t)}$ as a leader at the current iteration. In essence, client $k^{*(t)}$ can hint to others that the neighborhood of $\bm{x}_{k^{*(t)}}^{(t)}$ is an area of potential improvement. 
To do so, during the consensus step, for the non-lead clients $k\neq k^{*(t)}$, we decrease their weights by $\frac{1}{TK}$ and increase their off-diagonal weights with the leader $k^{*(t)}$ by $\frac{K-1}{TK}$. To this end, we adjust blocks of $\bm{W}^{(t)}_2$ as follows: 
\begin{align*}
[\bm{W}^{(t)}_2]_{\neq k^{*(t)}, k^{*(t)}} = [\bm{W}^{(t)}_1]_{\neq k^{*(t)}, k^{*(t)}}+\frac{K-1}{TK},\\ 
[\bm{W}^{(t)}_2]_{k^{*(t)},\neq k^{*(t)}} = [\bm{W}^{(t)}_1]_{k^{*(t)},\neq k^{*(t)}}+\frac{K-1}{TK},\\
[\bm{W}^{(t)}_2]_{\neq k^{*(t)}, \neq k^{*(t)}} = [\bm{W}^{(t)}_1]_{\neq k^{*(t)}, \neq k^{*(t)}} - \frac{1}{TK},\\
[\bm{W}^{(t)}_2]_{k^{*(t)}, k^{*(t)}} = [\bm{W}^{(t)}_1]_{k^{*(t)}, k^{*(t)}} - \frac{(K-1)^2}{TK}, 
\end{align*}
where $[\bm{W}_2^{(t)}]_{\neq i,j}$ represents all elements that are in the $j^{th}$ column but not in the $i^{th}$ row of $\bm{W}_2^{(t)}$.
Interestingly, the weight $[\bm{W}^{(t)}_2]_{k^{*(t)}, k^{*(t)}}$ shrinks to maintain double stochasticity. Such an assignment allows the leader $k^{*(t)}$ to explore new regions. 

Here we also suggest two heuristics: (i) To avoid the situation where the same client is selected in succession (this will cause the same client to keep exploring rather than exploiting), we propose to select the second largest index from $\{S^{(t+1)}_1,S^{(t+1)}_2,\ldots,S^{(t+1)}_K\}$ if $k^{*(t)}=k^{*(t+1)}$. (ii) In the extreme case where  $[\bm{W}^{(t)}_2]_{k^{*(t)}, k^{*(t)}}$ is negative, set $[\bm{W}^{(t)}_2]_{k^{*(t)}, k^{*(t)}}$ to zero and reweigh other components accordingly. 

To see an example, consider a scenario where $D=1$, $K=3$ and $T=10$. We initialize both $\bm{W}^{(0)}_1$ and $\bm{W}^{(0)}_2$ as
\begin{align*}
    \bm{W}^{(0)}_1=\bm{W}^{(0)}_2=
    \begin{bmatrix}
    \frac{1}{3}&\frac{1}{3}&\frac{1}{3}\\ \frac{1}{3}&\frac{1}{3}&\frac{1}{3}\\ \frac{1}{3}&\frac{1}{3}&\frac{1}{3}\end{bmatrix}.
\end{align*}
Suppose each client received $(\bm{x}^{(0)}_1,1),(\bm{x}^{(0)}_2,5),(\bm{x}^{(0)}_3,4)$. Then we adjust $\bm{W}^{(0)}_2$ as
\begin{align*}
    \bm{W}^{(0)}_2=
    \begin{bmatrix}
    \frac{1}{3}-\frac{1}{30}&\frac{1}{3}+\frac{2}{30}&\frac{1}{3}-\frac{1}{30}\\ 
    \frac{1}{3}+\frac{2}{30}&\frac{1}{3}-\frac{4}{30}&\frac{1}{3}+\frac{2}{30}\\ \frac{1}{3}-\frac{1}{30}&\frac{1}{3}+\frac{2}{30}&\frac{1}{3}-\frac{1}{30}\end{bmatrix}.
\end{align*}
In this example, client 2 observed that sampling at $\bm{x}^{(0)}_2$ yields the highest benefits, and therefore we put more weight on client 2. As a result, clients 1 and 3 will explore towards $\bm{x}^{(0)}_2$. On the other hand, we decrease $[\bm{W}_2^{(0)}]_{2,2}$ to make sure that client 2 does not over-explore the region that contains $\bm{x}^{(0)}_2$.

\subsection{Contextualization under a specific surrogate and utility}
\label{subsec:context}

Now, given the generic framework presented in \ref{subsec:collaborative} and \ref{subsec:matrix}, we will contextualize Algorithm \ref{alg::consensus} and provide a concrete example. The iteration index $t$ is dropped for simplicity unless necessary. 

\subsubsection{Gaussian Process Surrogate}


At each iteration, we place a $\mathcal{GP}$ prior on the surrogate $\hat{f}_k$.
We model the additive noise term $\epsilon_{k}$ as independent and identically distributed ($i.i.d.$) noise that follows a normal distribution with zero mean and $v_k^2$ variance.  Now, given a new input location $\bm{x}^{\text{test}}$, the posterior predictive distribution of $\hat{f}_k(\bm{x}^{\text{test}}) \sim \mathbb{P}_{\hat{f}_k(\bm{x}^{\text{test}})|\mathcal{D}^{(t)}_k} \coloneqq \mathcal{N}(\mu_{k}(\bm{x}^{\text{test}};\mathcal{D}_k),\sigma^2_{k}(\bm{x}^{\text{test}};\mathcal{D}_k))$, where \begin{equation}
\begin{split}
\label{eq:pred}
    &\mu_{k}(\bm{x}^{\text{test}};\mathcal{D}_k)=\bm{K}(\bm{x}^{\text{test}}, \bm{X}_k)\left(\bm{K}(\bm{X}_k, \bm{X}_k)+v_k^2\bm{I}\right)^{-1}\bm{y}_k,\\
    &\sigma^2_{k}(\bm{x}^{\text{test}};\mathcal{D}_k)=\bm{K}(\bm{x}^{\text{test}}, \bm{x}^{\text{test}})-\bm{K}(\bm{x}^{\text{test}}, \bm{X}_k)\left(\bm{K}(\bm{X}_k,\bm{X}_k)+v_k^2\bm{I}\right)^{-1}\bm{K}(\bm{X}_k,\bm{x}^{\text{test}}),
\end{split}
\end{equation}
and $\bm{K}(\cdot,\cdot):\mathbb{R}^D\times\mathbb{R}^D\to\mathbb{R}$ is a covariance matrix whose entries are determined by some kernel function $\mathcal{K}(\cdot,\cdot)$. Similarly, $\mathbb{P}_{\hat{y}_k|\mathcal{D}^{(t)}_k}$ is derived by simply adding $v_k^2$ to $\sigma^2_{k}(\bm{x}^{\text{test}})$.

\subsubsection{Utility Function}

Given the $\mathcal{GP}$ surrogate, one can build a utility function that measures the benefits of conducting an experiment using a set of new design points. One common example is the expected improvement (\texttt{EI}) utility expressed as \citep{jones1998efficient}
\begin{align*}
    \mathbb{E}_{\hat{f}_k|\mathcal{D}^{(t)}_k}\left[U(\hat{f}_k(\bm{x}))\right]&=\mathbb{E}_{\hat{f}_k|\mathcal{D}^{(t)}_k}\left[(\hat{f}_k(\bm{x})-y_k^{*(t)})^+\right]=\texttt{EI}^{(t)}_k(\bm{x})\\
    &=\sigma^{(t)}_k(\bm{x};\mathcal{D}^{(t)}_k)\phi(z^{(t)}_k(\bm{x}))+(\mu^{(t)}_k(\bm{x};\mathcal{D}^{(t)}_k)-y^{*(t)}_{k})\Phi(z^{(t)}_k(\bm{x})),
\end{align*}
where $a^+=\max(a,0)$, $y_k^{*(t)}=\max \bm{y}^{(t)}_k$ is the current best response,  $\phi(\cdot)$ (or $\Phi(\cdot)$) is a probability density function (PDF) (or cumulative distribution function (CDF)) of a standard normal random variable, and $z^{(t)}_k(\bm{x})=\frac{\mu^{(t)}_k(\bm{x};\mathcal{D}^{(t)}_k)-y^{*(t)}_{k}}{\sigma^{(t)}_k(\bm{x};\mathcal{D}^{(t)}_k)}$. Here note that the expectation is taken with respect to $\hat{f}_k$ rather than $\hat{y}_k$. Another example is the knowledge gradient (\texttt{KG}) \citep{wu2016parallel} defined as $\mathbb{E}_{\hat{y}_k|\mathcal{D}^{(t)}_k}\left[U(\hat{y}_k(\bm{x}))\right]=\texttt{KG}^{(t)}_k(\bm{x})
    =\mathbb{E}_{\hat{y}_k|\mathcal{D}^{(t)}_k}\left[ \mu^{*{(t)}}_k(\bm{x};\mathcal{D}^{(t)}_k\cup\{\bm{x},\hat{y}_k(\bm{x})\}) \right] - \max_{\bm{x}'}(\mu^{(t)}_k(\bm{x}';\mathcal{D}^{(t)}_k))$,
where $\mu^{*(t)}_k(\bm{x};\mathcal{D}^{(t)}_k\cup\{\bm{x},\hat{y}_k(\bm{x})\})$ is the maximum of the updated posterior mean of the $\mathcal{GP}$ surrogate by taking one more sample at location $(\bm{x},\hat{y}_k(\bm{x}))$. \texttt{KG} can be interpreted as finding the new sampling location $\bm{x}$ that potentially increases the maximum updated posterior mean. Hereon, in the remainder of this paper, we drop $\mathcal{D}_k$ in $\mu_k,\sigma_k$ for the sake of compactness.

\subsection{Theoretical Analysis}
\label{subsec:theory}


Despite its immense success, BO theory is still in its infancy due to many fundamental challenges. First and foremost, the black-box nature of $f_k$ renders theory hard to derive due to the lack of known structure. Second, we still have a limited understanding of the properties, such as Lipschitz continuity, concavity, or smoothness, of many commonly employed utility functions. For example, even the \texttt{EI} utility in general is not Lipschitz continuous or concave. Third, despite recent advances in laying the theoretical foundations for understanding the generalization error bounds of $\mathcal{GP}$s \citep{lederer2019uniform, wang2020prediction}, bridging the gap between these bounds and errors incurred in the utility function remains an open problem and a rather challenging one. 

To circumvent these open problems while providing a theoretical proof of concept, we derive some theoretical insights that are confined to the \texttt{EI} utility in conjunction with a smooth $\mathcal{GP}$ kernel and a homogeneity assumption. While we believe that our results extend beyond these settings, we leave this analysis as an enticing challenge for future research. 

We focus on regret defined as  $r^{(t)}_{k}=f_{k}(\bm{x}_k^*)-f_{k}(\bm{x}^{(t)\text{new}}_k)$ for client $k$ at iteration $t$. Intuitively, regret measures the gap between the design function evaluated at the optimal solution $\bm{x}_k^*$ and the one evaluated at the consensus solution $\bm{x}^{(t)\text{new}}_k$, at iteration $t$. By definition, $r^{(t)}_{k}=0$ if \texttt{CBOC} recovers the global optimal solution. Our theoretical guarantee studies cumulative regret, more specifically, $R_{k,T}=\sum_{t=1}^Tr^{(t)}_{k}$.



By definition, $\texttt{EI}_k^{(t)}(\bm{x})=\mathbb{E}_{\hat{f}_k|\mathcal{D}^{(t)}_k}\left[(\hat{f}_k(\bm{x})-y_k^{*(t)})^+\right]\geq 0, \forall \bm{x}\in\mathbb{R}^D$. Therefore, we define a small positive constant $\kappa$ such that when $\texttt{EI}_k^{(t)}(\bm{x})<\kappa$, client $k$ stops its algorithm at iteration $t$. This stopping criterion is only used for theoretical development. In practice, we will run our algorithm for $T$ iterations or until all budgets are exhausted.

Below, we present our main Theorem and the sketch of our proof. Detailed information and all supporting Lemmas and required assumptions can be found in Appendices 1-2. 


\begin{theorem}
\label{theorem:1}
(Homogeneous Clients) Suppose $f_1=f_2=\cdots=f_K$, and suppose a squared exponential kernel function $\mathcal{K}_k(\bm{x},\bm{x}')=u_{k}^2\exp\left(\frac{\norm{\bm{x}-\bm{x}'}^2}{2\ell_k^2}\right)$ is used for the $\mathcal{GP}$ surrogate, where $u_k$ is the variance scale parameter and $\ell_k$ is the length parameter, and each client uses the {\normalfont\texttt{EI}} utility. Without loss of generality, assume $\mathcal{K}_k(\bm{x},\bm{x}')\leq 1$ and the initial sample size for each client is $2$. Under some assumptions (Appendix 1), given any doubly stochastic $\bm{W}^{(t)}$ with non-negative elements, for $\epsilon>0, \delta_1\in(0,T)$, with probability at least $(1-\frac{\delta_1}{T})^T$, the cumulative regret after $T>1$ iterations is 
\begin{align*}
    R_{k,T}&=\sum_{t=1}^Tr_{k}^{(t)}\leq \sqrt{\frac{6T\left[(\log T)^3+1+C \right](\log T)^{D+1}}{\log(1+v_k^{-2})}}\\
    &\qquad +\sqrt{\frac{2T(\log T)^{D+4}}{\log(1+v_k^{-2})}}+\sum_{t=1}^T\mathcal{O}\left(\frac{1}{(\log (2+t))^{0.5+\epsilon}}\right)\\
    &\sim\mathcal{O}(\sqrt{T\times(\log T)^{D+4}}),
\end{align*}
where $C=\log[\frac{1}{2\pi\kappa^2}]$.
\end{theorem}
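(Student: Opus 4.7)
\textbf{Proof plan for Theorem \ref{theorem:1}.}
My strategy is to decompose the per-iteration regret into a ``pure-BO'' piece that would appear even without collaboration and a ``consensus perturbation'' piece that captures the distortion introduced by multiplying the vector of local \texttt{EI} maximizers by $\bm{W}^{(t)}\otimes\bm{I}_D$. Concretely, let $\tilde{\bm{x}}_k^{(t)}=\argmax_{\bm{x}}\texttt{EI}_k^{(t)}(\bm{x})$ be the \emph{local} maximizer produced in the Optimization step of Algorithm~\ref{alg::consensus}, and write
\begin{align*}
r_k^{(t)} \;=\; \underbrace{\bigl[f_k(\bm{x}_k^{*})-f_k(\tilde{\bm{x}}_k^{(t)})\bigr]}_{\text{pure-BO regret }\tilde r_k^{(t)}} \;+\; \underbrace{\bigl[f_k(\tilde{\bm{x}}_k^{(t)})-f_k(\bm{x}_k^{(t)\text{new}})\bigr]}_{\text{consensus perturbation }\Delta_k^{(t)}} \, .
\end{align*}
The two halves of the stated bound correspond cleanly to $\sum_t \tilde r_k^{(t)}$ and $\sum_t \Delta_k^{(t)}$, respectively, and I will handle them separately before combining via a union bound.

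For the pure-BO term I would follow the classical \texttt{EI} regret recipe of Bull / Nguyen et al. adapted to the smooth squared-exponential setting. First I would establish a uniform Gaussian-process concentration bound of the form $|f_k(\bm{x})-\mu_k^{(t)}(\bm{x})|\le \beta_t\,\sigma_k^{(t)}(\bm{x})$, holding on a discretization of $\mathcal{X}$ with confidence $1-\delta_1/T$ per iteration, by combining Assumption-level smoothness of the kernel with a standard covering-number argument; the factor $(\log T)^{D+1}$ in the final bound comes from the size of this cover, and $(1-\delta_1/T)^T$ follows by a per-round union bound. Using $\texttt{EI}_k^{(t)}(\tilde{\bm{x}}_k^{(t)})\ge \texttt{EI}_k^{(t)}(\bm{x}_k^{*})$ together with the closed form of \texttt{EI}, I would (i) lower-bound $\texttt{EI}_k^{(t)}(\bm{x}_k^{*})$ by a constant multiple of $\tilde r_k^{(t)}$ whenever the concentration event holds, and (ii) upper-bound $\texttt{EI}_k^{(t)}(\tilde{\bm{x}}_k^{(t)})$ by a polylog-in-$t$ factor times the posterior standard deviation $\sigma_k^{(t)}(\bm{x}_k^{(t)\text{new}})$ at the \emph{actually sampled} point. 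The stopping threshold $\kappa$ enters via the constant $C=\log[1/(2\pi\kappa^2)]$ to preclude pathological rounds where \texttt{EI} is already vanishing. A Cauchy--Schwarz step then converts $\sum_t \sigma_k^{(t)}(\bm{x}_k^{(t)\text{new}})$ into $\sqrt{T\sum_t \sigma_k^{(t)}(\bm{x}_k^{(t)\text{new}})^2}$, and the latter sum is controlled by the maximum information gain $\gamma_T$ via the Srinivas et al.\ identity $\sum_t \sigma_k^{(t)}(\bm{x}_k^{(t)\text{new}})^2\le \tfrac{2}{\log(1+v_k^{-2})}\gamma_T$, with $\gamma_T=\mathcal{O}((\log T)^{D+1})$ for the squared-exponential kernel. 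The two terms $\sqrt{6T[(\log T)^3+1+C](\log T)^{D+1}/\log(1+v_k^{-2})}$ and $\sqrt{2T(\log T)^{D+4}/\log(1+v_k^{-2})}$ will emerge, respectively, from the two sources of polylog inflation in the \texttt{EI} upper/lower bounds.

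For the consensus perturbation $\Delta_k^{(t)}$, the key observation is that $\bm{x}_k^{(t)\text{new}}-\tilde{\bm{x}}_k^{(t)}=\sum_{j\neq k} w_{kj}^{(t)}(\tilde{\bm{x}}_j^{(t)}-\tilde{\bm{x}}_k^{(t)})$, so the perturbation is driven entirely by the off-diagonal mass of $\bm{W}^{(t)}$. Under either consensus schedule in Sec.~\ref{subsec:matrix}, the off-diagonal entries decay at a controlled rate (linearly in the uniform schedule); coupled with the smoothness of the squared-exponential kernel, which implies that with high probability $f_k$ inherits a Lipschitz-type modulus on the relevant bounded region, the perturbation is dominated by $\|\bm{x}_k^{(t)\text{new}}-\tilde{\bm{x}}_k^{(t)}\|$ times a slowly growing envelope. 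Pushing this through the homogeneity hypothesis $f_1=\cdots=f_K$---which makes the $\tilde{\bm{x}}_j^{(t)}$ of different clients mutually close up to sampling noise that shrinks in $t$---would yield the residual $\sum_t \mathcal{O}((\log(2+t))^{-(1/2+\epsilon)})$ term.

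The main obstacle I anticipate is the second bullet: tying the \texttt{EI} value at the local maximizer $\tilde{\bm{x}}_k^{(t)}$ to the posterior variance at the \emph{consensus} point $\bm{x}_k^{(t)\text{new}}$, since it is the latter, not the former, that actually enters the information-gain telescoping bound. The usual \texttt{EI} argument does this trivially because one samples at the maximizer; here the perturbation by $\bm{W}^{(t)}\otimes \bm{I}_D$ forces me to relate $\sigma_k^{(t)}(\tilde{\bm{x}}_k^{(t)})$ and $\sigma_k^{(t)}(\bm{x}_k^{(t)\text{new}})$ via kernel smoothness without accumulating unwanted factors of $K$ or of the perturbation magnitude. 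Controlling this interaction between the consensus map and the variance bookkeeping, while simultaneously ensuring that the decaying off-diagonal entries of $\bm{W}^{(t)}$ suffice to make the residual term summable as $\mathcal{O}((\log(2+t))^{-(1/2+\epsilon)})$, is where I expect the bulk of the technical work to lie; the remaining pieces (concentration, information gain, Cauchy--Schwarz, union bound) are largely standard.
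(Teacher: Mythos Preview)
Your overall toolkit---GP concentration with a $\beta_t$ sequence, the \texttt{EI} closed form via $\tau(z)=z\Phi(z)+\phi(z)$, the stopping threshold $\kappa$ entering as $C$, the information-gain bound $\sum_t(\sigma_k^{(t)})^{2}\le 2\gamma_T/\log(1+v_k^{-2})$ with $\gamma_T=\mathcal{O}((\log T)^{D+1})$, and a final Cauchy--Schwarz---matches the paper. The decomposition, however, is different: the paper pivots through the incumbent, writing $r_k^{(t)}=[f_k(\bm{x}_k^*)-y_k^{*(t)}]+[y_k^{*(t)}-f_k(\bm{x}_k^{(t)\text{new}})]$, and then transfers from $\texttt{EI}_k^{(t)}(\bm{x}_k^*)$ to $\texttt{EI}_k^{(t)}(\bm{x}_k^{(t)\text{new}})$ via $\texttt{EI}_k^{(t)}(\bm{x}_k^{(t)})$ using Lipschitz continuity of $\tau$ and of the posterior mean, rather than through your ``pure-BO regret at $\tilde{\bm{x}}_k^{(t)}$'' term. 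Either pivot can be made to work; the substantive issue is elsewhere.

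The genuine gap is your account of the consensus residual. The theorem is stated for \emph{any} doubly stochastic $\bm{W}^{(t)}$ with non-negative entries, so the specific decaying schedules of Sec.~\ref{subsec:matrix} play no role in the proof, and your appeal to ``off-diagonal entries decay at a controlled rate'' cannot be part of the argument. More importantly, the closeness of the local maximizers across clients and the closeness of $\sigma_k^{(t)}(\bm{x}_k^{(t)})$ to $\sigma_k^{(t)}(\bm{x}_k^{(t)\text{new}})$ are not derived from homogeneity; they are the ``some assumptions (Appendix~1)'' in the statement, namely Assumptions~\ref{assumption:2} and~\ref{assumption:3}, which directly postulate $\|\bm{x}_k^{(t)}-\bm{x}_j^{(t)}\|_2\le\mathcal{O}((\log(2+t))^{-(0.5+\epsilon)})$ and $|\sigma_k(\bm{x}_k^{(t)})-\sigma_k(\bm{x}_k^{(t)\text{new}})|\le\mathcal{O}((\log(2+t))^{-(0.5+\epsilon)})$. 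These assumptions are exactly what dissolve the obstacle you flagged (relating variance at the local maximizer to variance at the sampled consensus point) and what generate the residual $\sum_t\mathcal{O}((\log(2+t))^{-(0.5+\epsilon)})$. Trying to \emph{prove} that the $\tilde{\bm{x}}_j^{(t)}$ become mutually close from $f_1=\cdots=f_K$ alone is much harder than you suggest---clients have different datasets and hence different surrogates and different \texttt{EI} landscapes even in the homogeneous case---and the paper explicitly sidesteps this by assumption rather than attempting to establish it.
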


Theorem \ref{theorem:1} shows that the cumulative regret of Algorithm \ref{alg::consensus} has a sublinear growth rate in terms of the number of iterations. This implies that as the algorithm proceeds with more iterations, the consensus solution for client $k$ will be close to the optimal solution $\bm{x}_1^*=\cdots=\bm{x}_k^*$.

Theorem \ref{theorem:1} relies on the assumption of homogeneity. When heterogeneity is present, where at least one pair of design functions has some differences, providing convergence results is challenging.  To our knowledge, the theory here is vacant. Even if one were to provide a regularity assumption on the heterogeneity, say $f_k$'s are uniformly bounded and have optimal designs restricted within a small ball, it remains an open problem to understand how this regularity will propagate to the utility where we are able to understand the structural similarities and differences across $\{\texttt{EI}_k\}_{k=1}^K$. Indeed, this stems from the aforementioned second and third challenges in traditional BO. That said, Theorem \ref{theorem:1} may serve as a proof of concept, showcasing that our approach will drive clients to a region that contains optimal designs. Afterward, each client will focus on their own objective to obtain a client-specific optimal design (recall Fig. \ref{fig:change}). As shown in Sec. 4.3 and Sections 5-6, our empirical results highlight the advantageous properties of our approach under heterogeneous settings.

\paragraph{Proof Sketch}

We hereby provide a proof sketch for Theorem \ref{theorem:1}. We first decompose the regret $r_{k}^{(t)}$ as
 \begin{align*}
    r_{k}^{(t)}=\bigg\{\underbrace{f_{k}(\bm{x}_k^*)- y^{*(t)}_{k} }_{\text{A}} + \underbrace{y^{*(t)}_{k}- f_{k}(\bm{x}^{(t)\text{new}}_k)}_{\text{B}}\bigg\}.
\end{align*}
We let $\text{A}=f_{k}(\bm{x}_k^*)- y^{*(t)}_{k}$ and $\text{B}=y^{*(t)}_{k}- f_{k}(\bm{x}^{(t)\text{new}}_k)$. Next, we aim to bound terms A and B. 

\begin{itemize}
    \item Term A defines the difference between the optimal value of the design function and the current best-observed output value. Here, we first use a concentration inequality from \cite{srinivas2009gaussian} to bound the difference between the mean function $\mu_k^{(t)}(\cdot)$ of the $\mathcal{GP}$ surrogate and the truth $f_{k}(\cdot)$. We then show that, with a high probability, term A is upper bounded by the \texttt{EI} value evaluated at the optimal design $\bm{x}_k^*$ plus a scaled predictive standard deviation term. Mathematically, with probability $1-\delta$, $\delta\in(0,1)$, $$f_{k}(\bm{x}_k^*) -y^{*(t)}_{k}\leq \texttt{EI}_k^{(t)}(\bm{x}_k^*)+\sqrt{\beta_k^{(t)}}\sigma_k^{(t)}(\bm{x}_k^*),$$
    where $\{\beta_k^{(t)}\}_t$ is a non-decreasing sequence such that $\beta_k^{(t)}\sim\mathcal{O}((\log \frac{t}{\delta})^3)$.
    \item Term B defines the difference between the current best-observed output value and the underlying true function $f_k(\cdot)$ evaluated at the consensus solution. To proceed, we further expand B as
    \begin{align*}
        \text{B}&=y^{*(t)}_{k}- f_{k}(\bm{x}^{(t)\text{new}}_k)=y^{*(t)}_{k}-\mu_k^{(t)}(\bm{x}^{(t)\text{new}}_k) +\mu_k^{(t)}(\bm{x}^{(t)\text{new}}_k)- f_{k}(\bm{x}^{(t)\text{new}}_k).
    \end{align*}
    We then bound B using a similar strategy in bounding term A and obtain
    \begin{align*}
        \text{B}\leq\sigma^{(t)}_{k}(\bm{x}^{(t)\text{new}}_k)\left(\tau(-z_k^{(t)}(\bm{x}^{(t)\text{new}}_k))-\tau(z_k^{(t)}(\bm{x}^{(t)\text{new}}_k))+\sqrt{\beta_k^{(t)}} \right),
    \end{align*}
    where $\tau(z_k^{(t)}(\bm{x}))\coloneqq z_k^{(t)}(\bm{x})\Phi(z_k^{(t)}(\bm{x}))+\phi(z_k^{(t)}(\bm{x}))$.
\end{itemize}
Our next goal is to show that the summation of $r_{k}^{(t)}$ over $T$ iterations (i.e., the cumulative regret) is bounded. Here, note that there are two key components that appear in A and B: $\sigma^{(t)}_{k}(\cdot)$ and $\tau(\cdot)$. First, we analyze the behavior of the cumulative predictive variance $\sum_{t=1}^T\sigma^{2(t)}_{k}(\cdot)$. We show that, with a squared exponential kernel, $\sum_{t=1}^T\sigma^{2(t)}_{k}(\bm{x}^{(t)\text{new}}_k)\leq \frac{2}{\log(1+v_k^{-2})}\mathcal{O}((\log T)^{D+1})$. Second, we show that $\tau(-z_k^{(t)}(\bm{x}^{(t)\text{new}}_k))\leq 1+\sqrt{C}$, where $C=\log[\frac{1}{2\pi\kappa^2}]$. After some algebraic manipulation, we obtain the upper bound stated in Theorem \ref{theorem:1}.

 \qed

\section{Simulation Studies}
\label{sec:exp}

In this section, we validate \texttt{CBOC} on several simulation datasets. We consider a range of simulation functions from the Virtual Library of Simulation Experiments \citep{surjanovic2013virtual} as the underlying black-box design functions for the clients. 
Our goal is to showcase the benefit of collaboration in finding client-specific optimal designs.

We set the number of iterations for each testing function to $T=20D$. At iteration $0$, $\bm{W}^{(0)}$ is initialized as a uniform matrix where each entry has a weight equal to $\frac{1}{K}$. We use the uniform transitional approach (\texttt{CBOC-U}) or the leader-driven approach (\texttt{CBOC-L}) to adjust $\bm{W}^{(t)}$ at every iteration $t$. The initial dataset for each client $\{\mathcal{D}^{(0)}_{k}\}_{k=1}^K$ contains $5D$ randomly chosen design points. The performance of each client is evaluated using a Gap metric $G_k$ \citep{jiang2020binoculars} defined as
\begin{align*}
    G_k=\frac{|y^{*(0)}_k-y^{*(T)}_k|}{|y^{*(0)}_k-y^*_{k}|},
\end{align*}
where $y^{*(0)}_k$ (or $y^{*(T)}_k$) is the best observed response at iteration $0$ (or $T$), and $y^*_{k} = f_k(\bm{x}^{*}_k)$ is the true optimal response. A larger $G_k$ implies better performance. If the optimal solution is recovered (i.e., $y^{*(T)}_k=y^*_{k}$), then $G_k=1$. We compare \texttt{CBOC} using the \texttt{EI} utility with the following benchmark models: (1) \texttt{Individual}: each client $k$ conducts BO without collaboration; (2) \texttt{FedBO}: the state-of-the-art federated BO algorithm that builds upon Thompson sampling \citep{dai2021differentially}. For both benchmarks, we collect $5D$ initial samples and conduct experiments for $20D$ iterations. Our code is available at this {\href{https://github.com/UMDataScienceLab/Consensus_Bayesian_Opt/tree/main}{GitHub link}}. 

\subsection{Testing Function 1: Levy-$D$} 
\label{subsec:test_1}

Levy-$D$ is an $D$-dimensional function in the form of
\begin{align*}
    f(\bm{x})=\sin^2(\pi \omega_1)+\sum_{d=1}^{D-1}(\omega_d-1)^2[1+10\sin^2(\pi \omega_d+1)]+(\omega_D-1)^2[1+\sin^2(2\pi \omega_D)],
\end{align*}
where $\omega_d=1+\frac{\tensor[^d]{x}{}-1}{4}, \forall d\in[D]$ and $\bm{x}=(\tensor[^1]{x}{},\ldots,\tensor[^d]{x}{}\ldots,\tensor[^D]{x}{})^\intercal\in[-10,10]^D$. We first consider the homogeneous scenario where $f_1(\bm{x})=\cdots=f_K(\bm{x})=f(\bm{x})$. We focus on maximizing the design function $-f_k(\bm{x})$ (i.e., minimizing $f_k(\bm{x})$), $\forall k\in[K]$. In Table \ref{Table:1}, we report the average Gap across $K=5$ clients over 30 independent runs, defined as follows:
\begin{align*}
    \bar{G}=\frac{1}{30}\sum_{i=1}^{30}\frac{1}{K}\sum_{k=1}^KG^{(i)}_k,
\end{align*}
where $G^{(i)}_k$ is the Gap metric for client $k$ at the $i$-th run.


\begin{table}[!htbp]
\centering
\begin{tabular}{cccc}
\hline
\textbf{Functions}  & \texttt{CBOC-L} & \texttt{Individual} & \texttt{FedBO}   \\ \hline
\textbf{Levy-$2$} & \bm{$0.993 (\pm 0.002)$} & $0.931(\pm 0.004)$  & $0.990(\pm 0.002)$ \\ \hline
\textbf{Levy-$4$} & \bm{$0.987(\pm 0.003)$} & $0.926(\pm 0.007)$  & $0.951(\pm 0.006)$   \\ \hline
\textbf{Levy-$8$} & \bm{$0.969(\pm 0.010)$} & $0.925(\pm 0.013)$  & $0.938(\pm 0.009)$  \\ \hline
\end{tabular}
\caption{The average Gap across $K=5$ clients over 30 independent runs under a homogeneous setting. We report standard deviations over 30 runs in brackets.}
\label{Table:1}
\end{table}

Second, we consider the heterogeneous scenario where each client has a different underlying truth. To do so, for each client $k$, we transform the Levy function $f(\bm{x})$ to
\begin{align*}
    f_k(\bm{x})=a_1f(\bm{x}+\text{vec}(a_3))+a_2 \, ,
\end{align*}
where $a_1\sim\texttt{Uniform}(0.5, 1)$, $a_2,a_3\sim\mathcal{N}(0,1)$, and $\text{vec}(a_3)$ is a $D$-dimensional vector whose elements are all equal to $a_3$. This transformation will shift and re-scale the original function, creating heterogeneous functional forms. The homogeneous scenario can be viewed as a special case when $a_1=1,a_2=a_3=0$. We set $K=10$. Other settings remain unchanged. Results are shown in Table \ref{Table:2}.

\begin{table}[!htbp]
\centering
\begin{tabular}{cccc}
\hline
\textbf{Functions}  & \texttt{CBOC-L} & \texttt{Individual} & \texttt{FedBO}   \\ \hline
\textbf{Levy-$2$} & \bm{$0.990(\pm 0.001)$} & $0.942(\pm 0.005)$ & $0.958(\pm 0.003)$ \\ \hline
\textbf{Levy-$4$} & \bm{$0.984(\pm 0.002)$} & $0.933(\pm 0.012)$  & $0.940(\pm 0.009)$ \\ \hline
\textbf{Levy-$8$} & \bm{$0.949(\pm 0.008)$} & $0.917(\pm 0.008)$  & 0.$903(\pm 0.011)$   \\ \hline
\end{tabular}
\caption{The average Gap across $K=10$ clients over 30 independent runs under a heterogeneous setting.}
\label{Table:2}
\end{table}

From Tables \ref{Table:1}-\ref{Table:2}, we can derive two key insights. First, collaborative methods yield superior performance than non-collaborative competitors through higher average Gap metrics. This evidences the importance of collaboration in improving the optimal design process. Second, \texttt{CBOC} outperforms all benchmarks. This credits to \texttt{CBOC}'s ability to address heterogeneity through a flexible consensus framework that allows clients to collaboratively explore and exploit the design space and eventually obtain client-specific solutions. 


\subsection{An illustrative example}
To visualize the performance of our method, we provide an illustration using the Levy-$2$ function, $K=2$, and the same heterogeneity structure as in Sec. \ref{subsec:test_1}. Specifically, the two design functions are set to $f_1(\bm{x})=f(\bm{x}+\text{vec}(1))+1, f_2(\bm{x})=2f(\bm{x}+\text{vec}(2))+2$. Here, each client starts with five two-dimensional designs, and then both \texttt{CBOC-L} and \texttt{Individual} are run for $T=40$. Fig. \ref{fig:path} demonstrates the landscape of the original Levy-$2$ function and shows contour plots of $f_1,f_2$. The selected design points for each client are marked as red for \texttt{CBOC-L} and green for \texttt{Individual}, and the iteration indices are labeled next to those points. We did not label all points for better visualization.

From Fig. \ref{fig:path}, we can see that through collaboration \texttt{CBOC-L} samples more frequently near optimal regions, as evident by the larger number of red points close to the optimal. This allows \texttt{CBOC-L} to reach the region that contains the optimal design (white-most region) for both clients in 25 iterations. On the other hand, \texttt{Individual} required around 38 iterations for client 1, while for client 2, even at $T=40$, the optimal region was still not explored yet. This again highlights the benefit of collaboration.  \textit{More importantly, Fig. \ref{fig:path} highlights the ability of our method to operate under heterogeneity as both clients were able to reach their distinct optimal design neighborhood, and they do so much faster than operating in isolation}.   


\begin{figure}[!htbp]
    \centering
    \centerline{\includegraphics[width=0.9\columnwidth]{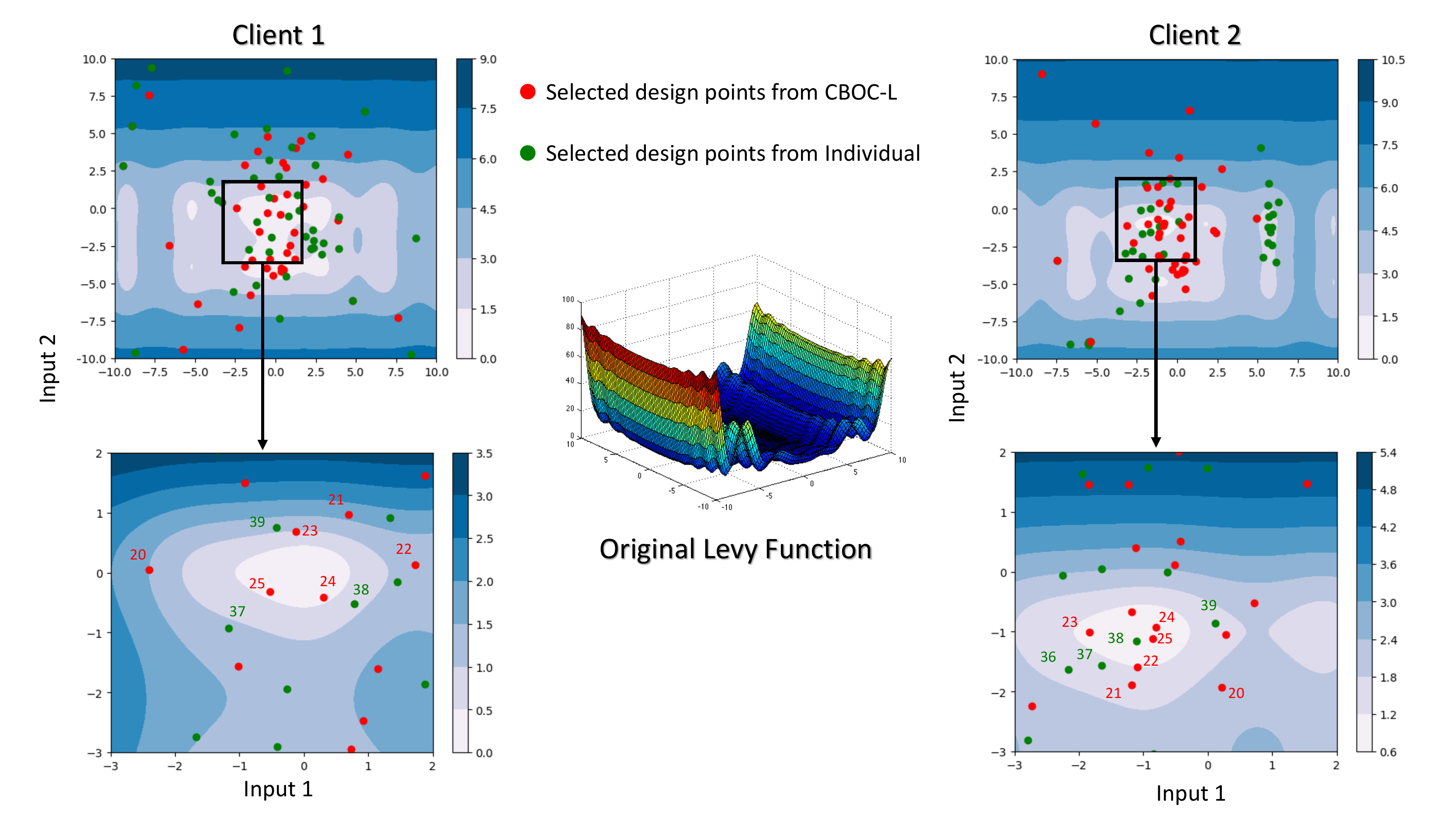}}
    \caption{Contour plot of $f_1,f_2$ and selected design points. The x-axis shows the first design input (i.e., $\tensor[^1]{x}{}$), and the y-axis shows the second design input (i.e., $\tensor[^2]{x}{}$). }
    \label{fig:path}
\end{figure}

\subsection{Testing Function 2: Shekel-$10$} 

Shekel-$10$ is a four-dimensional ($D=4$) with $10$ local minima. It has the functional form:
\begin{align*}
    f(\bm{x})=-\sum_{i=1}^{10}\left(\sum_{d=1}^4(\tensor[^d]{x}{}-F_{di})^2+\xi_i\right)^{-1}.
\end{align*}
We defer the specification of $F_{di}$ and $\xi_i$ to Appendix 3.

Similar to the heterogeneous scenario in Sec. \ref{subsec:test_1}, for each client $k$, we transform the Shekel-$10$ function $f(\bm{x})$ to $f_k(\bm{x})=a_1f(\bm{x}+\text{vec}(a_3))+a_2$, where $a_1\sim\texttt{Uniform}(0.5, 1)$ and $a_2\sim\mathcal{N}(0,2), a_3\sim\mathcal{N}(0,1)$. We test all benchmarks using $K=5,10,15$ and $20$ clients.

\begin{table}[!htbp]
\centering
\begin{tabular}{ccccc}
\hline
\textbf{Functions}  & \texttt{CBOC-L} & \texttt{CBOC-U} & \texttt{Individual} & \texttt{FedBO}   \\ \hline
$K=5$ & \bm{$0.475(\pm 0.053)$} & $0.462(\pm 0.052)$ & $0.350(\pm 0.055)$ & $0.370(\pm 0.047)$ \\ \hline
$K=10$ & \bm{$0.516(\pm 0.049)$} & $0.501(\pm 0.029)$ & $0.364(\pm 0.035)$ & $0.422(\pm 0.040)$    \\ \hline
$K=15$ & \bm{$0.577(\pm 0.033)$} & $0.553(\pm 0.036)$ & $0.356(\pm 0.022)$  & $0.496(\pm 0.053)$   \\ \hline
$K=20$ & \bm{$0.592(\pm 0.036)$} & $0.572(\pm 0.015)$ & $0.335(\pm 0.028)$ & $0.535(\pm 0.051)$   \\ \hline
\end{tabular}
\caption{The average Gap across $K$ clients over 30 independent runs under a heterogeneous setting.}
\label{Table:3}
\end{table}

Table \ref{Table:3} shows that \texttt{CBOC} outperforms both the non-collaborative method and state-of-the-art \texttt{FedBO}. Interestingly, we observe that the performance of \texttt{CBOC} improves as more clients participate in the collaboration process. For example, the average Gap for \texttt{CBOC} when $K=5$ is 0.475. As we increase $K$ to $20$, the average Gap becomes 0.592. Fig. \ref{fig:more_clients} plots the evolution of the average Gap with respect to iterations. This result further demonstrates the benefits of collaboration.

\begin{figure}[!htbp]
    \centering
    \centerline{\includegraphics[width=0.5\columnwidth]{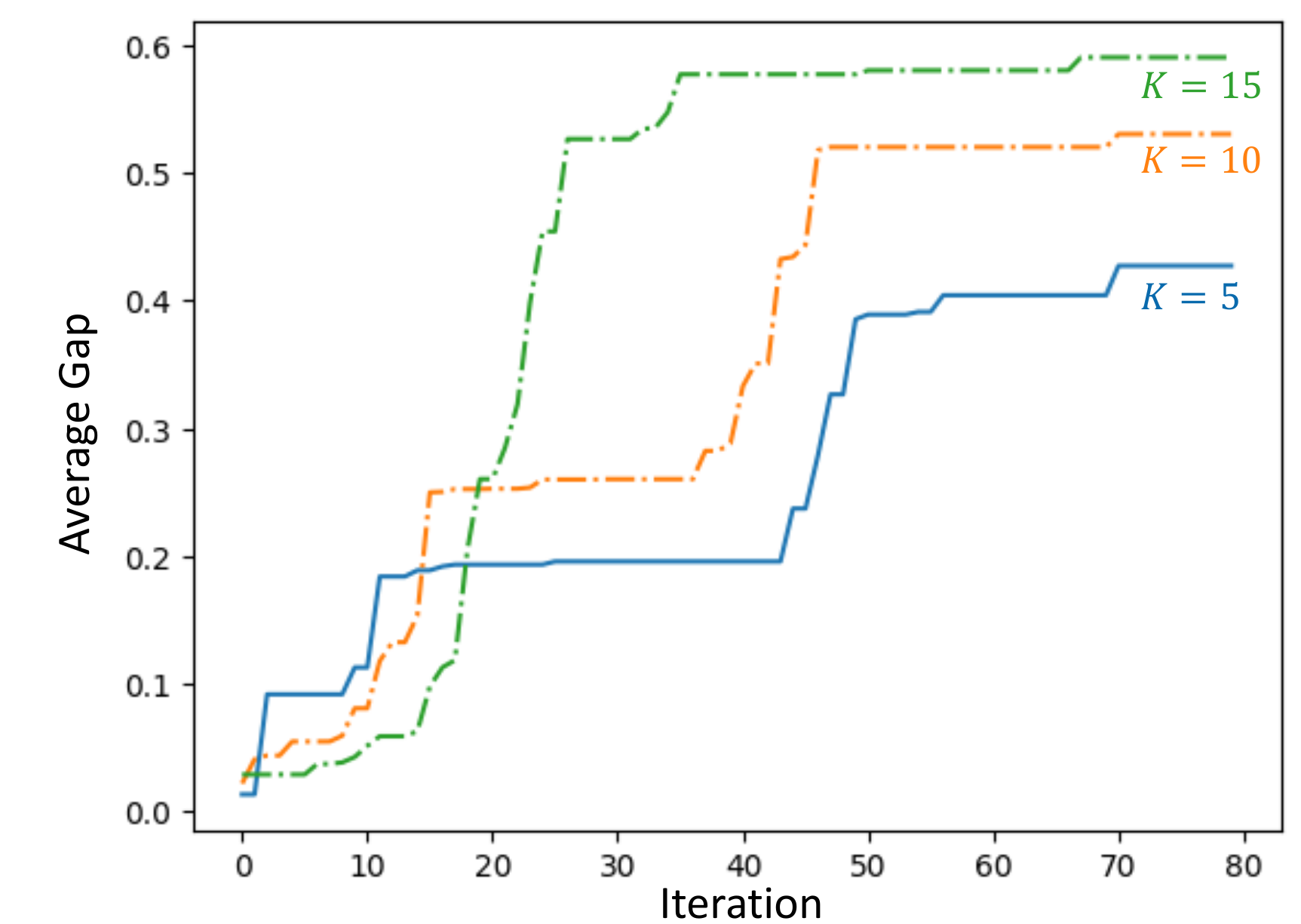}}
    \caption{The evolution of the average Gap with respect to iterations and collaborators.}
    \label{fig:more_clients}
\end{figure}

\subsection{Other Testing Functions}
Finally, we test our methods on three other functions:  Branin, Ackley-$D$, and Harmann-$6$. We use $K=10$. Experimental details and function specifications are deferred to the appendix.

\begin{table}[!htbp]
\centering
\begin{tabular}{ccccc}
\hline
\textbf{Functions}  & \texttt{CBOC-L} & \texttt{CBOC-U} & \texttt{Individual} & \texttt{FedBO}   \\ \hline
Branin & \bm{$0.992(\pm 0.000)$} & $0.990(\pm 0.001)$ & $0.975(\pm 0.001)$ & $0.986(\pm 0.001)$ \\ \hline
Ackley-$5$ & $\bm{0.656(\pm 0.057)}$ & $0.641(\pm 0.053)$ & $0.501(\pm 0.039)$ & $0.632(\pm 0.041)$    \\ \hline
Hartmann-$6$ & \bm{$0.968(\pm 0.005)$} & $0.959(\pm 0.002)$ & $0.941(\pm 0.001)$ & $0.955(\pm 0.003)$  \\ \hline
\end{tabular}
\caption{The average Gap across $K=10$ clients over 30 independent runs under a heterogeneous setting.}
\label{Table:4}
\end{table}

Similar to our previous analysis, Table \ref{Table:4} shows the superiority of our approach.   

\section{Case Study}
\label{sec:real}

A case study on consensus BO-driven closed-loop biosensor design optimization was performed (see Fig. \ref{fig:CPScurrent}). The closed-loop workflow was based on simulation (FEA) guided by a decentralized computing process for collaboration (i.e., consensus BO). Given the demand for device-based biosensors for industrial process analytical technology (e.g., for bioprocess monitoring and control) and health monitoring (e.g., via wearable sensors), the case study was focused on closed-loop optimization of a device-based biosensor. Specifically, our case study focused on optimizing a device-based biosensor with a milli-scale transducer used in a continuous flow format, which is consistent with several types of device-based biosensors, including electrochemical, mechanical, and electromechanical biosensors.  

\begin{figure}[htbp!]
\vspace{-1em}
	\centering
	\makebox[\textwidth]{\includegraphics[keepaspectratio=true,width=1\textwidth]{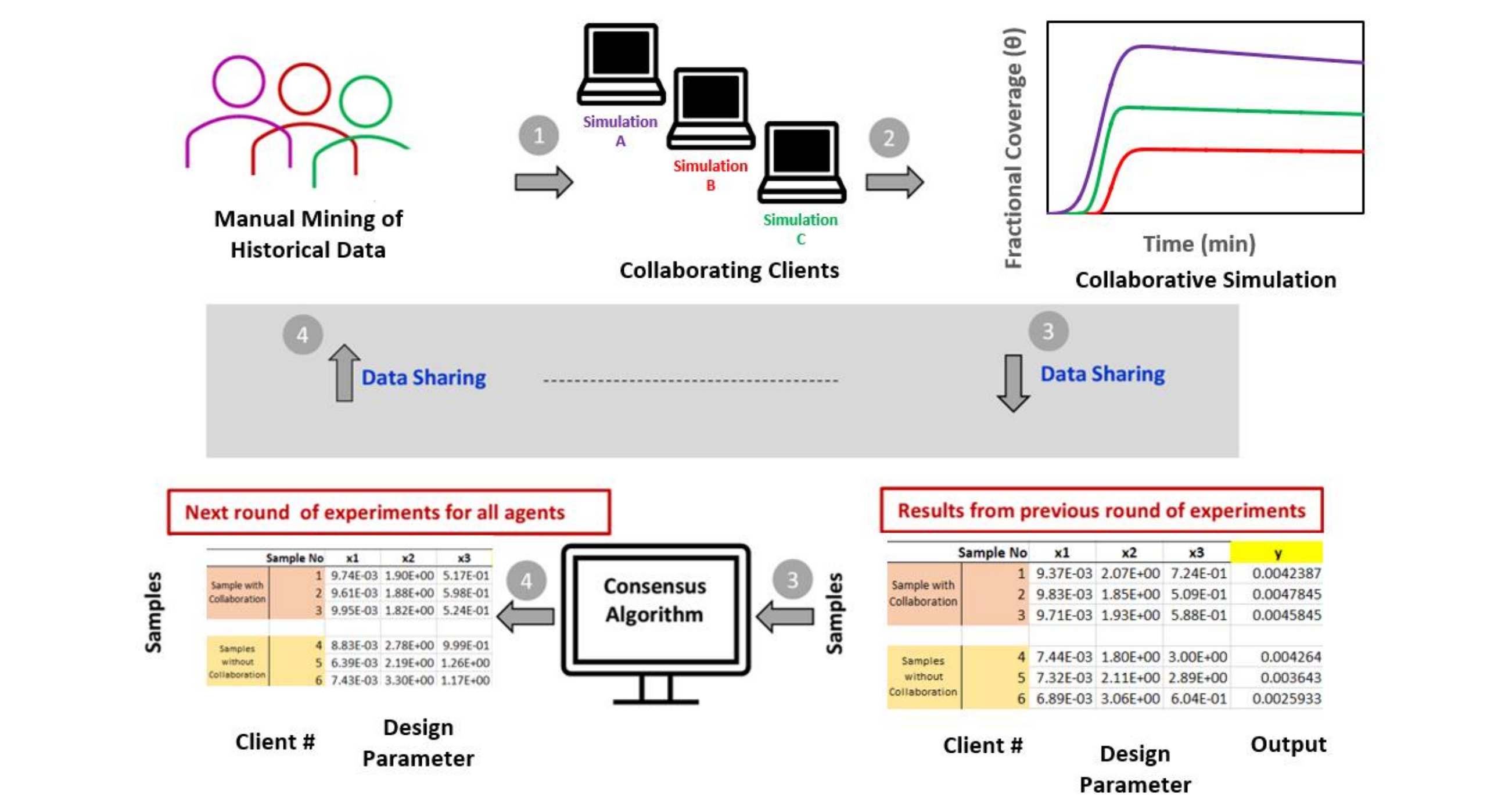}}
	\caption{\label{fig:CPScurrent} Case study schematic.}
	\vspace{-1em}
\end{figure}

Biosensor performance, specifically, transient binding of the target analyte, was calculated using commercially available FEA software (COMSOL Multiphysics, COMSOL). The transient fractional surface coverage of the bound target analyte on the biosensor surface was calculated by numerical solution of a coupled convection-diffusion-reaction model in 2D using a time-dependent study. The convection-diffusion-reaction model was constructed by coupling laminar flow and transport of diluted species physics with a surface reaction. The computational domain consisted of a 6 $\times$ 1 mm$^2$ (length × width) rectangular fluidic channel that encompassed the domain $x \in [-3, 3]$ mm and $y \in [0, 1]$ mm. The biosensor surface (i.e., surface 1) on which the binding reaction between immobilized biorecognition elements and target analyte occurred encompassed the domain $x \in [0, 1]$ mm at $y = 0$. The computational domain contained five additional surfaces on which boundary and initial conditions were applied. The coupled partial differential equations associated with the convection-diffusion-reaction model were solved subject to the following boundary and initial conditions:    

For laminar flow physics:

\begin{itemize}
    \item Surface 2 (inlet): normal inflow velocity = $u_{\text{in}}$.
    \item Surface 3 (outlet): static pressure = 0 Pa.
    \item Surfaces 1 and 4-6 (top and bottom walls, including the sensing surface): no slip.
\end{itemize}
Here, no slip represents a traditional no slip boundary condition. The fluid properties were defined by water and obtained from the FEA software’s material library. The initial velocity and pressure fields were zero. 

For transport of diluted species physics: 

\begin{itemize}
    \item Surface 1: $J_{0,c}=-r_{\text{ads}} + r_{\text{des}}$.
    \item Surface 2: $c=c_0$.
    \item Surface 3: $J_{0,c}=\bm{n}\cdot D_i\nabla c_i$.
    \item Suraces 4-6: no flux.
\end{itemize}
In the above equations, $J_{0,c}$ is the mass flux, $r_{\text{ads}}=k_{\text{on}}cc_s$ is the adsorption (binding) rate of the target analyte, $c$ is the concentration of the target analyte in solution, $c_s$ is the concentration of immobilized biorecognition element, $r_{\text{des}}=k_{\text{off}}c_s^*$ is the desorption rate, $c_s^*$ is the concentration of occupied sites, $c_0 = c_{\text{in}}\text{GP}(t)$, $c_{\text{in}} = 1$ nM is the concentration of the target analyte in the injected sample, $\text{GP}(t)$ is a time-dependent Gaussian pulse function with integral normalization and integral value of unity, and no flux represents a traditional no flux boundary condition. The following values of transport properties and rate constants were used in the simulation: $D_i = 1 \times 10^{-11}$ m$^2$/s is the diffusivity of the target analyte in solution, the binding (forward) rate constant ($k_{\text{on}}) = 1 \times 10^6$ M$^{-1}$s$^{-1}$, and the unbinding (reverse) rate constant ($k_{\text{off}}$) $= 1 \times 10^{-3}$ s$^{-1}$. The initial concentration field was zero.

For surface reaction chemistry: 

\begin{itemize}
    \item Surface 1: reaction rate $=r_{\text{ads}} - r_{\text{des}}$.
    \item Surfaces 2-6: no reaction.
\end{itemize}
The following surface properties were used: $\rho_s =$ density of surface sites and site occupancy number $= 1$. The surface diffusion of the bound target analyte was assumed to be zero. The initial concentration of bound target was zero.

The convection-diffusion-reaction model was discretized and solved subject to the aforementioned boundary and initial conditions using a physics-controlled adaptive mesh that contained 373004 (plus 351428 degrees of freedom) in the final mesh for a given combination of inputs $[x_1, x_2, x_3] = [u_{\text{in}}, \rho_s, \text{GP}_{\text{std}}]$, where $\text{GP}_{\text{std}}$ is the standard deviation of the Gaussian pulse associated with the injected sample. The specific values of $u_{\text{in}}, \rho_s, \text{GP}_{\text{std}}$ examined were systematically selected by traditional BO and the consensus BO model, which are referred to as ``non-collaborative” and ``collaborative” learning, respectively.

The simulation’s output of interest (i.e., the quantity to be optimized) was the maximum fractional surface coverage of the bound target analyte ($\theta_{\text{max}}$) during the transient binding response, which is a fundamental characteristic of biosensor function that is associated with the performance characteristics of sensitivity, detection limit, dynamic range, and speed.

The objective of this case study was to discover (i.e., learn) the biosensor design and measurement format parameters that maximize $\theta_{\text{max}}$ (i.e., the maximum amount of captured target analyte). An optimal solution (i.e., design) was sought within the design space of $x_1 \in [1 \times 10^{-4}, 1 \times 10^{-2}]$ m/s, $x_2 \in [1.8 \times 10^{-8}, 3.3 \times 10^{-8}]$ mol/m$^2$, and $x_3 \in [0.5, 3]$ min.  The range and limits of the design space were selected based on practical values used in previous biosensing studies \citep{selmi2017optimization, squires2008making, baronas2021biosensors, johnson2012sample}.  In summary, the objective of this case study was to learn the optimal combination of $u_{\text{in}}, \rho_s, \text{GP}_{\text{std}}$ that maximized the amount of captured target analyte and to compare the learning performance achieved by traditional BO performed by a group of independent clients that do not share information with that achieved by collaborative learning performed by a group of clients that share information and make decisions for next-to-test values using consensus BO.

The case study involved two groups: a non-collaborative group and a collaborative group, each consisting of three clients. The non-collaborative group used conventional centralized BO algorithms, where each client solves its own problem and there is no communication among clients. In contrast, the collaborative group utilized the \texttt{CBOC} algorithm (Algorithm \ref{alg::consensus}) to collaboratively find optimal solutions. For \texttt{CBOC}, we adopted the leader-driven matrix mentioned in Sec. \ref{subsec:matrix}. Fig. \ref{fig:trend} shows the trend of biosensor performance, specifically maximum amount of captured target species, throughout the iterative closed-loop workflow. As shown in Fig. \ref{fig:trend}, the self-driving workflow driven by consensus BO was capable of learning an optimized biosensor design and measurement format parameter selection after ten rounds of experimentation as evidenced by the trend of $\theta_{\text{max}}$. Additionally, the average biosensor performance was higher and exhibited lower variance in the collaborative group relative to the non-collaborative group, which highlights the value of leveraging collaboration in closed-loop high-throughput experimentation.

\begin{figure}[!htbp]
    \centering
    \centerline{\includegraphics[width=0.8\columnwidth]{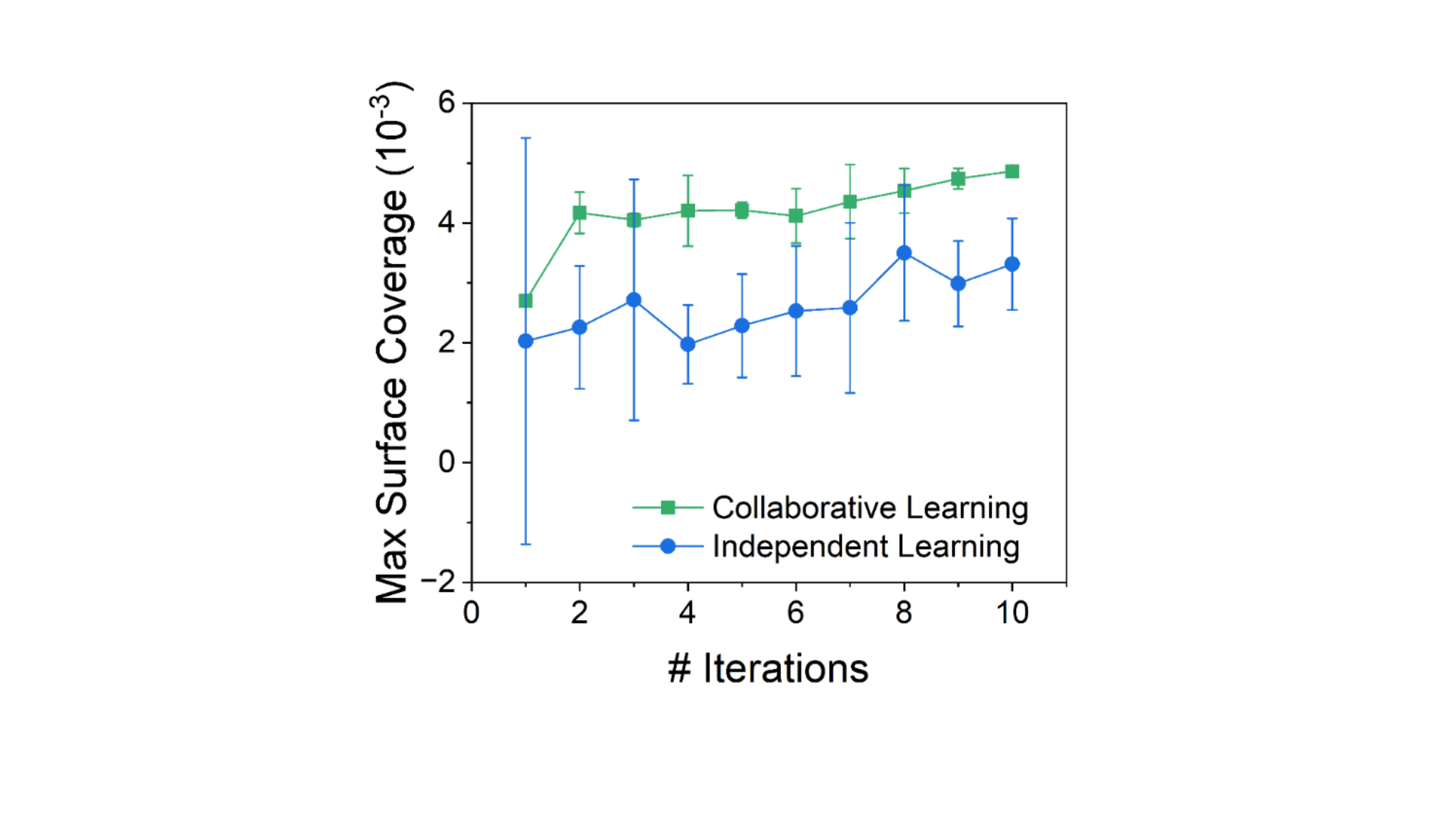}}
    \caption{Trend in the biosensor’s maximum fractional surface coverage (i.e., the maximum amount of captured target analyte) vs. iteration number (i.e., sequential rounds of simulation) via BO without and with collaboration after ten iterations. }
    \label{fig:trend}
\end{figure}

This case study serves as a proof-of-concept showcasing that collaboration in optimal design can reap benefits for the participating entities.

\section{Discussion and Conclusion}
\label{sec:con}

In this paper, we propose a collaborative Bayesian optimization framework built upon a consensus mechanism. Our experiments on simulated data and a real-world sensor design case study show that collaboration through our proposed framework can accelerate and improve the optimal design process.

Collaborative BO is still in its infancy stage, and few prior studies exist along this line. Indeed, there are many avenues of potential improvement. These avenues include: (i) extending the consensus framework to multi-objective and multi-fidelity settings, (ii) developing a resource-aware approach that allows some clients to perform more or fewer experiments depending on their resources, (iii) understanding the theoretical conditions needed for the collaborative design to outperform its non-collaborative counterpart. If successful, this may lead to targeted algorithms that exploit these conditions to improve performance. We hope this research may inspire future work along these avenues and beyond in the future.

\newpage

\appendix

\section*{Appendix}

\section{Technical Results}

Denote by $\bm{x}_k^*=\argmax_{\bm{x}} f_k(\bm{x})$ the global optimal solution of the design function for client $k$. Recall that, in Algorithm 1, $\bm{x}^{(t)\text{new}}_k=[(\bm{W}^{(t)}\otimes\bm{I}_D)\bm{x}_{\mathcal{C}}^{(t)}]_k$. Let $r^{(t)}_{k}=f_{k}(\bm{x}_k^*)-f_{k}(\bm{x}^{(t)\text{new}}_k)$ be the regret for client $k$ at iteration $t$. \textbf{Our goal is to derive an upper bound on the cumulative regret $R_{k,T}=\sum_{t=1}^Tr^{(t)}_{k}$ and show that $R_{k,T}$ has a sublinear growth rate.} We first define a stopping criterion for the expected improvement utility function $\texttt{EI}_k^{(t)}$. By definition, $\texttt{EI}_k^{(t)}(\bm{x})\coloneqq\mathbb{E}_{\hat{f}_k|\mathcal{D}^{(t)}_k}\left[U(\hat{f}_k(\bm{x}))\right]=\mathbb{E}_{\hat{f}_k|\mathcal{D}^{(t)}_k}\left[(\hat{f}_k(\bm{x})-y_k^{*(t)})^+\right]\geq 0, \forall \bm{x}\in\mathbb{R}^D$. Therefore, we define a small positive constant $\kappa$ such that when $\texttt{EI}_k^{(t)}(\bm{x})<\kappa$, we stop our algorithm at iteration $t$. Next, we will present the assumptions we made for our main Theorem.

\begin{assumption}
\label{assumption:1}
(Homogeneity) Design functions are homogeneous (i.e., $f_1(\bm{x})=f_2(\bm{x})=\cdots=f_K(\bm{x})$ for all $\bm{x}\in\mathbb{R}^D$).
\end{assumption}


\begin{assumption}
\label{assumption:4}
(Initial Sample Size) Without loss of generality, we assume the initial sample size is 2 such that $N^{(t)}_k=2+t$, for all $k\in[K]$, at iteration $t$.
\end{assumption}

\begin{assumption}
\label{assumption:2}
(Bounded Utility Solutions) Assume $\norm{\bm{x}^{(t)}_k-\bm{x}^{(t)}_{j}}_2\leq \mathcal{O}(\frac{1}{(\log (2+t))^{0.5+\epsilon}})$ for all $k,j\in[K]$, where $\epsilon$ is a positive constant.
\end{assumption}

\begin{assumption}
\label{assumption:3}
(Bounded Variance) Assume $|\sigma_k(\bm{x}^{(t)}_k)-\sigma_k(\bm{x}^{(t)\textup{new}}_k)|\leq \mathcal{O}(\frac{1}{(\log (2+t))^{0.5+\epsilon}})$, for all $k\in[K]$, where $\epsilon$ is a positive constant. 
\end{assumption}


Assumption \ref{assumption:2} states that the distance between design points returned by any two clients is bounded by $\mathcal{O}(\frac{1}{(\log (2+t))^{0.5+\epsilon}})$ at every iteration $t$. Recall that $\bm{x}^{(t)}_k=\argmax_{\bm{x}}\texttt{EI}^{(t)}_k(\bm{x})$. Furthermore, Assumption \ref{assumption:3} assumes that the difference between the $\mathcal{GP}$ posterior variance evaluated at $\bm{x}^{(t)}_k$ and that evaluated at the consensus solution is bounded. These two assumptions are intuitively understandable: at the early stages, since clients have very little data, their surrogate functions will differ even under a homogeneous setting. Therefore, the error term $\mathcal{O}(\frac{1}{(\log (2+t))^{0.5+\epsilon}})$ is large (i.e., due to small $t$). As the collaborative process proceeds, the error term decreases, and gradually the solutions returned by all clients will be close since the underlying true design functions are the same. Eventually if  $\bm{x}_1^{(t)}=\bm{x}_2^{(t)}=\cdots=\bm{x}_K^{(t)}$, then $\bm{W}^{(t)}\bm{x}^{(t)}_{\mathcal{C}}=\bm{x}^{(t)}_{\mathcal{C}}$, since $\bm{x}^{(t)}_{\mathcal{C}}$ concatenates all $\{\bm{x}_k^{(t)}\}_{k=1}^K$. The variance difference $|\sigma_k(\bm{x}^{(t)}_k)-\sigma_k(\bm{x}^{(t)\text{new}}_k)|$ will be zero accordingly.

One natural question is: why do we believe the error term is decreasing at a rate of $\mathcal{O}\left(\frac{1}{(\log N^{(t)}_k)^{0.5+\epsilon}}\right) $? Fortunately, this argument can be justified using the following Corollary.
\begin{corollary}
\label{cor:1}
\citep{lederer2019uniform} Suppose $f_k(\cdot)$ is sampled from a zero mean $\mathcal{GP}$ defined through a Lipschitz continuous kernel function $\mathcal{K}_k(\cdot,\cdot)$. Given $N_k$ collected observations, if there exists an $\epsilon$ such that $\sigma_k(\bm{x})\in\mathcal{O}(\frac{1}{(\log N_k)^{0.5+\epsilon}})$, $\forall \bm{x}\in\mathbb{R}^D$, then it holds for every $\delta_0\in(0,1)$ that
\begin{align*}
    \mathbb{P}\left(\sup_{\bm{x}}\norm{\mu_k(\bm{x})-f_{k}(\bm{x})}\in\mathcal{O}\left(\frac{1}{(\log N_k)^\epsilon}-\log(\delta_0)\frac{1}{(\log N_k)^{0.5+\epsilon}}\right) \right)\geq 1-\delta_0.
\end{align*}
\end{corollary}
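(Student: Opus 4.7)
The plan is to establish the uniform bound by combining a standard pointwise Gaussian concentration at fixed inputs with a discretization/covering argument that extends the control uniformly. First I would fix the (assumed compact) input domain $\mathcal{X}\subset\mathbb{R}^D$ and build a $\tau$-net $\{\bm{z}_m\}_{m=1}^M$, with $M=\mathcal{O}(\tau^{-D})$, such that every $\bm{x}\in\mathcal{X}$ is within Euclidean distance $\tau$ of some $\bm{z}_m$. The resolution $\tau$ is a free parameter that will be tuned at the end as a function of $N_k$.

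Next, the pointwise step: since $f_k$ is drawn from the prior $\mathcal{GP}$, the posterior law of $f_k(\bm{z}_m)-\mu_k(\bm{z}_m)$ given the observations is $\mathcal{N}(0,\sigma_k^2(\bm{z}_m))$. Standard Gaussian tail bounds together with a union bound over the $M$ net points give that, with probability at least $1-\delta_0$,
\begin{align*}
\max_{1\le m\le M}|\mu_k(\bm{z}_m)-f_k(\bm{z}_m)|\le \sup_{\bm{x}}\sigma_k(\bm{x})\sqrt{2\log(2M/\delta_0)}.
\end{align*}
Under the hypothesis $\sigma_k(\bm{x})\in\mathcal{O}((\log N_k)^{-(0.5+\epsilon)})$, choosing $M$ polynomially large in $N_k$ (so $\log M=\mathcal{O}(\log N_k)$) and splitting $\sqrt{2\log(2M)-2\log\delta_0}\le \sqrt{2\log(2M)}+\sqrt{-2\log\delta_0}$, together with the elementary inequality $\sqrt{-\log\delta_0}\le 1+(-\log\delta_0)$, produces exactly the two advertised orders of magnitude: $(\log N_k)^{-\epsilon}$ from the $\sqrt{\log M}$ factor and $-\log(\delta_0)\cdot(\log N_k)^{-(0.5+\epsilon)}$ from the confidence factor.

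The third step is to extend from the net to all of $\mathcal{X}$ via Lipschitz continuity. The Lipschitz continuity of $\mathcal{K}_k$ translates into a deterministic Lipschitz constant $L_\mu$ on the posterior mean $\mu_k$ (through the closed-form kernel expression for $\mu_k$) and, with probability at least $1-\delta_0'$, into a Lipschitz constant $L_f$ on the sample path of $f_k$ via a modulus-of-continuity/Borell--TIS-type argument. Then for any $\bm{x}\in\mathcal{X}$ and its nearest net point $\bm{z}_{m(\bm{x})}$,
\begin{align*}
|\mu_k(\bm{x})-f_k(\bm{x})|\le |\mu_k(\bm{z}_{m(\bm{x})})-f_k(\bm{z}_{m(\bm{x})})|+(L_\mu+L_f)\tau.
\end{align*}
Choosing $\tau=N_k^{-\alpha}$ for a modest $\alpha>0$ keeps $\log M=\mathcal{O}(\log N_k)$ while forcing $(L_\mu+L_f)\tau$ to decay polynomially in $N_k$; this Lipschitz-correction term is asymptotically dominated by the concentration term above, so the combined bound retains the claimed rate. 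Absorbing $\delta_0'$ into $\delta_0$ (by replacing $\delta_0\mapsto\delta_0/2$ in both steps) yields the final probability guarantee.

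The main obstacle is the Lipschitz control of the random trajectory $f_k$: kernel Lipschitz continuity alone does not deterministically bound the Lipschitz constant of a GP sample path, so the argument must either invoke a high-probability metric-entropy/Dudley bound on $L_f$ or impose slightly stronger regularity (e.g., continuous differentiability of $\mathcal{K}_k$) to make $L_f$ exist and be finite almost surely with a controllable tail. A secondary technicality is that under the algorithm, $\sigma_k^2$ depends on the random adaptive design $\bm{X}_k$; however, since the hypothesis is posed as a deterministic uniform envelope on $\sigma_k$, we can treat it as fixed and avoid any extra randomness in the concentration step.
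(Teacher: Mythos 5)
The paper does not actually prove this corollary: it is imported verbatim, with citation, from \cite{lederer2019uniform}, and no argument for it appears anywhere in the appendix. So the only meaningful comparison is with the proof in that reference, and your proposal is essentially a faithful reconstruction of it: a $\tau$-net over a compact domain, pointwise Gaussian concentration of the posterior $f_k(\bm{z}_m)-\mu_k(\bm{z}_m)\sim\mathcal{N}(0,\sigma_k^2(\bm{z}_m))$ with a union bound over the $M=\mathcal{O}(\tau^{-D})$ net points, and a Lipschitz extension off the net. Your bookkeeping is right: with $\log M=\mathcal{O}(\log N_k)$ the factor $\sigma_k\sqrt{\log M}$ gives the $(\log N_k)^{-\epsilon}$ term, and splitting off $\sqrt{-2\log\delta_0}$ and using $\sqrt{u}\le 1+u$ gives the $-\log(\delta_0)(\log N_k)^{-(0.5+\epsilon)}$ term, so the two advertised orders come out exactly as claimed.

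The one genuine gap is the one you flag yourself: Lipschitz continuity of the kernel does not by itself give a (deterministic or even almost-surely finite, quantitatively controlled) Lipschitz constant for the sample path $f_k$, so the step $|f_k(\bm{x})-f_k(\bm{z}_{m(\bm{x})})|\le L_f\tau$ needs either a high-probability modulus-of-continuity bound (metric entropy / Borell--TIS, costing another $\delta_0/2$ of probability) or a strengthening of the kernel regularity, which is precisely how the cited reference handles it (they control the Lipschitz constants of $\mu_k$ and $\sigma_k$ through $L_{\mathcal K}$ and treat the regularity of $f_k$ as a separate assumption). Since you name this obstacle and the correct ways to close it, and the remaining steps are sound, I would regard the proposal as a correct proof sketch; to be fully rigorous you would need to state the compactness of the domain as a hypothesis (the corollary as written quantifies over all of $\mathbb{R}^D$, where a finite net does not exist) and make the probabilistic bound on $L_f$ explicit rather than deferred.
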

This Lemma implies that the $\mathcal{GP}$ surrogate has an approximation error that is decreasing at a rate of $\mathcal{O}\left(\frac{1}{(\log N^{(t)}_k)^{0.5+\epsilon}}\right) $. This error bound decreases as more samples are collected (larger $N^{(t)}_k$). 






To this end, we state our main Theorem and its proof below. All supporting lemmas are in Sec. \ref{sec:2}.


\begin{theorem-non}
\label{theorem:1_app}
Suppose a squared exponential kernel function $\mathcal{K}_k(\bm{x}_1,\bm{x}_2)=u_{k}^2\exp\left(\frac{\norm{\bm{x}_1-\bm{x}_2}^2}{2\ell_k^2}\right)$ is used for the $\mathcal{GP}$ surrogate and each client uses the {\normalfont\texttt{EI}} utility. Without loss of generality, assume $\mathcal{K}_k(\bm{x}_1,\bm{x}_2)\leq 1$. Suppose assumptions \ref{assumption:1}-\ref{assumption:3} hold, and the algorithm will stop when $\texttt{EI}_k^{(t)}<\kappa$, then given any doubly stochastic $\bm{W}^{(t)}$ with non-negative elements, with probability at least $(1-\frac{\delta_1}{T})^T$, where $\delta_1\in(0,T)$, the cumulative regret after $T>1$ iterations is 
\begin{align*}
    R_{k,T}&=\sum_{t=1}^Tr_{k}^{(t)}\leq \sqrt{\frac{6T\left[(\log T)^3+1+C \right](\log T)^{D+1}}{\log(1+v_k^{-2})}}+\sqrt{\frac{2T(\log T)^{D+4}}{\log(1+v_k^{-2})}}+\sum_{t=1}^T\mathcal{O}\left(\frac{1}{(\log (2+t))^{0.5+\epsilon}}\right)\\
    &\sim\mathcal{O}(\sqrt{T\times(\log T)^{D+4}}),
\end{align*}
where $C=\log[\frac{1}{2\pi\kappa^2}]$.
\end{theorem-non}

\begin{proof}
Recall that we define the regret as $r_{k}^{(t)}=f_{k}(\bm{x}_k^*)-f_{k}(\bm{x}^{(t)\text{new}}_k)$. We decompose the regret $r_{k}^{(t)}$ as
\begin{align*}
    r_{k}^{(t)}=\bigg\{\underbrace{f_{k}(\bm{x}_k^*)- y^{*(t)}_{k} }_{\text{A}} + \underbrace{y^{*(t)}_{k}- f_{k}(\bm{x}^{(t)\text{new}}_k)}_{\text{B}}\bigg\},
\end{align*}
where $y^{*(t)}_{k}=\max \bm{y}^{(t)}_k$ is the largest observed output value from iteration $t$. 

By Lemma \ref{lemma:2}, we have, with probability $1-\delta$ ($\delta\in(0,1)$), ${\normalfont\texttt{EI}}^{(t)}_k(\bm{x})\geq I_k^{(t)}(\bm{x})-\sqrt{\beta_k^{(t)}}\sigma_k^{(t)}(\bm{x})$, where $I_k^{(t)}(\bm{x})=(f_{k}(\bm{x}) -y^{*(t)}_{k})^+$ and $\beta^{(t)}_{k}$ is defined in Lemma \ref{lemma:1}. Therefore, we obtain
\begin{align*}
    &{\normalfont\texttt{EI}}^{(t)}_k(\bm{x}_k^*)\geq I_k^{(t)}(\bm{x}_k^*)-\sqrt{\beta_k^{(t)}}\sigma_k^{(t)}(\bm{x}_k^*),\\
    &\Rightarrow I_k^{(t)}(\bm{x}_k^*)\leq{\normalfont\texttt{EI}}^{(t)}_k(\bm{x}_k^*)+\sqrt{\beta_k^{(t)}}\sigma_k^{(t)}(\bm{x}_k^*).
\end{align*}
If $f_{k}(\bm{x})>y^{*(t)}_{k}$, we have $I_k^{(t)}(\bm{x}_k^*)=f_{k}(\bm{x}_k^*)-y^{*(t)}_{k}$. Otherwise, we have $I_k^{(t)}(\bm{x}_k^*)=0\geq f_{k}(\bm{x}_k^*)-y^{*(t)}_{k}$. Under both situations, we bound term A as
\begin{align*}
    \text{A}&\leq \bigg\{ \texttt{EI}^{(t)}_k(\bm{x}_k^*)+\sqrt{\beta^{(t)}_{k}}\sigma^{(t)}_{k}(\bm{x}_k^*)\bigg\},
\end{align*}
with probability $1-\delta$. Next, we want to study the relationship between $\texttt{EI}^{(t)}_k(\bm{x}_k^*)$ and $\texttt{EI}_k^{(t)}(\bm{x}^{(t)}_k)$ to further bound term A.

\paragraph{Relationship between $\texttt{EI}^{(t)}_k(\bm{x}_k^*)$ and $\texttt{EI}_k^{(t)}(\bm{x}^{(t)}_k)$} By definition and Lemma \ref{lemma:2}, we have
\begin{align*}
    \texttt{EI}_k^{(t)}(\bm{x}^{(t)}_k)&=\sigma_k^{(t)}(\bm{x}^{(t)}_k)\phi(z_k^{(t)}(\bm{x}^{(t)}_k))+(\mu^{(t)}_k(\bm{x}^{(t)}_k)-y^{*(t)}_{k})\Phi(z_k^{(t)}(\bm{x}^{(t)}_k))\\
    &=\sigma_k^{(t)}(\bm{x}^{(t)}_k)\tau(z_k^{(t)}(\bm{x}^{(t)}_k)),
\end{align*}
where $z_k^{(t)}(\cdot)$ and $\tau(\cdot)$ have been defined in Lemma \ref{lemma:2}. Therefore,
\begin{align*}
    &|\texttt{EI}^{(t)}_k(\bm{x}_k^{(t)})-\texttt{EI}^{(t)}_k(\bm{x}^{(t)\text{new}}_k)|\\
    &=|\sigma_k^{(t)}(\bm{x}^{(t)}_k)\tau(z_k^{(t)}(\bm{x}^{(t)}_k))-\sigma_k^{(t)}(\bm{x}^{(t)\text{new}}_k)\tau(z_k^{(t)}(\bm{x}^{(t)\text{new}}_k))|.
\end{align*}
By Assumption \ref{assumption:3}, we have $|\sigma_k(\bm{x}^{(t)}_k)-\sigma_k(\bm{x}^{(t)\textup{new}}_k)|\leq \mathcal{O}(\frac{1}{(\log (2+t))^{0.5+\epsilon}})$. This implies 
\begin{align*}
    \sigma_k(\bm{x}^{(t)}_k) -\mathcal{O}(\frac{1}{(\log (2+t))^{0.5+\epsilon}})\leq \sigma_k(\bm{x}^{(t)\textup{new}}_k)\leq \sigma_k(\bm{x}^{(t)}_k) +\mathcal{O}(\frac{1}{(\log (2+t)^{0.5+\epsilon}}).
\end{align*}
Since $\texttt{EI}^{(t)}_k(\bm{x}_k^{(t)})\geq \texttt{EI}^{(t)}_k(\bm{x}^{(t)\text{new}}_k)$ ($\bm{x}_k^{(t)}$ is the maximizer of $\texttt{EI}_k$), we take the lower bound of $\sigma_k(\bm{x}^{(t)\textup{new}}_k)$ and get
\begin{align*}
    &|\texttt{EI}^{(t)}_k(\bm{x}_k^{(t)})-\texttt{EI}^{(t)}_k(\bm{x}^{(t)\text{new}}_k)|\\
    &\leq |\sigma_k^{(t)}(\bm{x}^{(t)}_k)\tau(z_k^{(t)}(\bm{x}^{(t)}_k))-(\sigma_k^{(t)}(\bm{x}^{(t)}_k)-\mathcal{O}(\frac{1}{(\log (2+t))^{0.5+\epsilon}} ))\tau(z_k^{(t)}(\bm{x}^{(t)\text{new}}_k))|\\
    &= |\sigma_k^{(t)}(\bm{x}^{(t)}_k)(\tau(z_k^{(t)}(\bm{x}^{(t)}_k))-\tau(z_k^{(t)}(\bm{x}^{(t)\text{new}}_k)) ) + \mathcal{O}(\frac{1}{(\log (2+t))^{0.5+\epsilon}} )\tau(z_k^{(t)}(\bm{x}^{(t)\text{new}}_k))|.
\end{align*}
We know $\tau(\cdot)$ is Lipschitz continuous on a bounded domain. Assume the Lipschitz constant is $L_\tau$, we obtain
\begin{align}
\label{inequality:1}
    &|\texttt{EI}^{(t)}_k(\bm{x}_k^{(t)})-\texttt{EI}^{(t)}_k(\bm{x}^{(t)\text{new}}_k)|\nonumber\\
    &\leq |\sigma_k^{(t)}(\bm{x}^{(t)}_k)(\tau(z_k^{(t)}(\bm{x}^{(t)}_k))-\tau(z_k^{(t)}(\bm{x}^{(t)\text{new}}_k)) ) + \mathcal{O}(\frac{1}{(\log (2+t))^{0.5+\epsilon}} )\tau(z_k^{(t)}(\bm{x}^{(t)\text{new}}_k))|\nonumber\\
    &\leq L_\tau|\sigma_k^{(t)}(\bm{x}^{(t)}_k)|  |z_k^{(t)}(\bm{x}^{(t)}_k)-z_k^{(t)}(\bm{x}^{(t)\text{new}}_k)|+ |\mathcal{O}(\frac{1}{(\log (2+t))^{0.5+\epsilon}} )\tau(z_k^{(t)}(\bm{x}^{(t)\text{new}}_k))|
\end{align}
Recall that $z_k^{(t)}(\bm{x})=\frac{\mu^{(t)}_k(\bm{x})-y^{*(t)}_{k}}{\sigma_k^{(t)}(\bm{x})}$ for any $\bm{x}\in\mathbb{R}^D$, we then obtain
\begin{align*}
     &|z_k^{(t)}(\bm{x}^{(t)}_k)-z_k^{(t)}(\bm{x}^{(t)\text{new}}_k)|\\
     &=\left|\frac{\mu^{(t)}_k(\bm{x}^{(t)}_k)-y^{*(t)}_{k}}{\sigma_k^{(t)}(\bm{x}^{(t)}_k)}-\frac{\mu^{(t)}_k(\bm{x}^{(t)\text{new}}_k)-y^{*(t)}_{k}}{\sigma_k^{(t)}(\bm{x}^{(t)\text{new}}_k)}\right|\\
     &=\left|\frac{\sigma_k^{(t)}(\bm{x}^{(t)\text{new}}_k)\mu^{(t)}_k(\bm{x}^{(t)}_k)-\sigma_k^{(t)}(\bm{x}^{(t)\text{new}}_k)y^{*(t)}_{k}}{\sigma_k^{(t)}(\bm{x}^{(t)}_k)\sigma_k^{(t)}(\bm{x}^{(t)\text{new}}_k)}-\frac{\sigma_k^{(t)}(\bm{x}^{(t)}_k)\mu^{(t)}_k(\bm{x}^{(t)\text{new}}_k)-\sigma_k^{(t)}(\bm{x}^{(t)}_k)y^{*(t)}_{k}}{\sigma_k^{(t)}(\bm{x}^{(t)}_k)\sigma_k^{(t)}(\bm{x}^{(t)\text{new}}_k)}\right|\\
     &=\left|\frac{\sigma_k^{(t)}(\bm{x}^{(t)\text{new}}_k)\mu^{(t)}_k(\bm{x}^{(t)}_k)-\sigma_k^{(t)}(\bm{x}^{(t)\text{new}}_k)y^{*(t)}_{k}-\sigma_k^{(t)}(\bm{x}^{(t)}_k)\mu^{(t)}_k(\bm{x}^{(t)\text{new}}_k)+\sigma_k^{(t)}(\bm{x}^{(t)}_k)y^{*(t)}_{k}}{\sigma_k^{(t)}(\bm{x}^{(t)}_k)\sigma_k^{(t)}(\bm{x}^{(t)\text{new}}_k)}\right|.
\end{align*} 
Now, to bound $|z_k^{(t)}(\bm{x}^{(t)}_k)-z_k^{(t)}(\bm{x}^{(t)\text{new}}_k)|$, we need to study the properties of $\mu_k$ and $\sigma_k$.

\cite{lederer2019uniform} show that $\mathcal{GP}$ with a squared exponential kernel has a Lipschitz continuous posterior mean function. Assume the Lipschitz constant is $L_{\mu}$. Therefore,  we have $|\mu^{(t)}_k(\bm{x}_k^{(t)\text{new}})-\mu^{(t)}_k(\bm{x}_k^{(t)})|\leq L_{\mu}\norm{\bm{x}_k^{(t)\text{new}}-\bm{x}_k^{(t)}}_2$. By Assumption \ref{assumption:2}, we know $\norm{\bm{x}_k^{(t)}-\bm{x}_j^{(t)}}_2\leq\mathcal{O}(\frac{1}{(\log (2+t))^{0.5+\epsilon}})$. Therefore, we can show that
\begin{align*}
    \norm{\bm{x}_k^{(t)\text{new}}-\bm{x}_k^{(t)}}_2&=\norm{\sum_{j=1}^Kw^{(t)}_{kj}\bm{x}_j^{(t)}-\bm{x}_k^{(t)}}_2\\
    &=\norm{\sum_{j=1}^Kw^{(t)}_{kj}\bm{x}_j^{(t)}-\sum_{j=1}^Kw^{(t)}_{kj}\bm{x}_k^{(t)}}_2 & & \text{Since $\bm{W}^{(t)}$ is doubly stochastic}\\
    &\leq\mathcal{O}(\frac{1}{(\log (2+t))^{0.5+\epsilon}}) .
\end{align*}
Therefore, we know $|\mu^{(t)}_k(\bm{x}_k^{(t)\text{new}})-\mu^{(t)}_k(\bm{x}_k^{(t)})|$ is of order $\mathcal{O}(\frac{1}{(\log (2+t))^{0.5+\epsilon}}).$ Without loss of generality, we assume $\mu_k^{(t)}\geq 0$ and the negative case follows a similar proof. By Assumption \ref{assumption:3}, we know $|\sigma_k^{(t)}(\bm{x}^{(t)\text{new}}_k)-\sigma_k^{(t)}(\bm{x}^{(t)}_k)|\leq \mathcal{O}(\frac{1}{(\log (2+t))^{0.5+\epsilon}})$. Therefore $\sigma_k^{(t)}(\bm{x}^{(t)\text{new}}_k)$ is upper bounded by $\mathcal{O}(\frac{1}{(\log (2+t))^{0.5+\epsilon}})+\sigma_k^{(t)}(\bm{x}^{(t)}_k)$. When $\mu^{(t)}_k(\bm{x}^{(t)}_k)\geq \mu^{(t)}_k(\bm{x}^{(t)\text{new}}_k)$, we take the upper bound of $\sigma_k^{(t)}(\bm{x}^{(t)\text{new}}_k)$ and derive that 
\begin{align*}
    &\left|\sigma_k^{(t)}(\bm{x}^{(t)\text{new}}_k)\mu^{(t)}_k(\bm{x}^{(t)}_k) - \sigma_k^{(t)}(\bm{x}^{(t)}_k)\mu^{(t)}_k(\bm{x}^{(t)\text{new}}_k)\right|\\
    &=\left|(\sigma_k^{(t)}(\bm{x}^{(t)}_k)+\mathcal{O}(\frac{1}{(\log (2+t))^{0.5+\epsilon}}))\mu^{(t)}_k(\bm{x}^{(t)}_k) - \sigma_k^{(t)}(\bm{x}^{(t)}_k)\mu^{(t)}_k(\bm{x}^{(t)\text{new}}_k)\right|\\
    &=\left|\sigma_k^{(t)}(\bm{x}^{(t)}_k)(\mu^{(t)}_k(\bm{x}^{(t)}_k)-\mu^{(t)}_k(\bm{x}^{(t)\text{new}}_k)) +\mathcal{O}(\frac{1}{(\log (2+t))^{0.5+\epsilon}})\mu^{(t)}_k(\bm{x}^{(t)}_k)  \right|.
\end{align*}
By Assumption \ref{assumption:3}, we know $\sigma_k^{(t)}(\bm{x}^{(t)}_k)$ is finite. Otherwise, the bound on $|\sigma_k^{(t)}(\bm{x}^{(t)}_k)-\sigma_k^{(t)}(\bm{x}^{(t)\text{new}}_k)|$ is infinite. Also, $\sigma_k^{(t)}(\bm{x}^{(t)}_k)\neq 0$. Therefore, we know $\sigma_k^{(t)}(\bm{x}^{(t)}_k)(\mu^{(t)}_k(\bm{x}^{(t)}_k)-\mu^{(t)}_k(\bm{x}^{(t)\text{new}}_k))$ is of order $ \mathcal{O}(\frac{1}{(\log (2+t))^{0.5+\epsilon}})$. By Corollary \ref{lemma:1}, we know $\mu^{(t)}_k(\bm{x}^{(t)}_k)$ is finite. As a result, $\left|\sigma_k^{(t)}(\bm{x}^{(t)\text{new}}_k)\mu^{(t)}_k(\bm{x}^{(t)}_k) - \sigma_k^{(t)}(\bm{x}^{(t)}_k)\mu^{(t)}_k(\bm{x}^{(t)\text{new}}_k)\right|$ is of order  $ \mathcal{O}(\frac{1}{(\log (2+t))^{0.5+\epsilon}})$.

When $\mu^{(t)}_k(\bm{x}^{(t)}_k)\leq \mu^{(t)}_k(\bm{x}^{(t)\text{new}}_k)$, we take the lower bound of $\sigma_k^{(t)}(\bm{x}^{(t)\text{new}}_k)$ and the same conclusion holds. 

As a result, we show that $|z_k^{(t)}(\bm{x}^{(t)}_k)-z_k^{(t)}(\bm{x}^{(t)\text{new}}_k)|$ is of order $ \mathcal{O}(\frac{1}{(\log (2+t))^{0.5+\epsilon}})$. Therefore, Eq. \eqref{inequality:1} gives $\texttt{EI}^{(t)}_k(\bm{x}_k^{(t)})\leq \texttt{EI}^{(t)}_k(\bm{x}^{(t)\text{new}}_k)+\mathcal{O}(\frac{1}{(\log (2+t))^{0.5+\epsilon}})$. Additionally, we know $\texttt{EI}^{(t)}_k(\bm{x}_k^*)\leq\texttt{EI}^{(t)}_k(\bm{x}_k^{(t)})$ since $\bm{x}_k^{(t)}$ is the maximizer of $\texttt{EI}_k$. This gives $\texttt{EI}^{(t)}_k(\bm{x}_k^*)\leq \texttt{EI}^{(t)}_k(\bm{x}^{(t)\text{new}}_k)+\mathcal{O}(\frac{1}{(\log (2+t))^{0.5+\epsilon}})$. Therefore, with probability $1-\delta$,
\begin{align*}
    \text{A}&\leq \texttt{EI}^{(t)}_k(\bm{x}^{(t)\text{new}}_k)+\mathcal{O}(\frac{1}{(\log (2+t))^{0.5+\epsilon}})+\sqrt{\beta^{(t)}_{k}}\sigma^{(t)}_{k}(\bm{x}_k^*)\\
    &= \sigma_k^{(t)}(\bm{x}^{(t)\text{new}}_k)\tau(z_k^{(t)}(\bm{x}^{(t)\text{new}}_k)) +\sqrt{\beta^{(t)}_{k}}\sigma^{(t)}_{k}(\bm{x}_k^*)+\mathcal{O}(\frac{1}{(\log (2+t))^{0.5+\epsilon}}),
\end{align*}
where the last equality holds by Lemma \ref{lemma:2}.

\paragraph{Bounding Term B} We first decompose B as
\begin{align*}
    \text{B}&=y^{*(t)}_{k}- f_{k}(\bm{x}^{(t)\text{new}}_k)=y^{*(t)}_{k}-\mu_k^{(t)}(\bm{x}^{(t)\text{new}}_k) +\mu_k^{(t)}(\bm{x}^{(t)\text{new}}_k)- f_{k}(\bm{x}^{(t)\text{new}}_k).
\end{align*}
Using Lemma \ref{lemma:1}, we know $\mu_k^{(t)}(\bm{x}^{(t)\text{new}}_k)- f_{k}(\bm{x}^{(t)\text{new}}_k)\leq \sigma^{(t)}_{k}(\bm{x}^{(t)\text{new}}_k)\sqrt{\beta_k^{(t)}}$.
Therefore, we can bound term B as
\begin{align*}
    \text{B}&=y^{*(t)}_{k}- f_{k}(\bm{x}^{(t)\text{new}}_k)=y^{*(t)}_{k}-\mu_k^{(t)}(\bm{x}^{(t)\text{new}}_k) +\mu_k^{(t)}(\bm{x}^{(t)\text{new}}_k)- f_{k}(\bm{x}^{(t)\text{new}}_k)\\
    &\leq \sigma^{(t)}_{k}(\bm{x}^{(t)\text{new}}_k)(-z_k^{(t)}(\bm{x}^{(t)\text{new}}_k)) +  \sigma^{(t)}_{k}(\bm{x}^{(t)\text{new}}_k)\sqrt{\beta_k^{(t)}}\\
    &\leq\sigma^{(t)}_{k}(\bm{x}^{(t)\text{new}}_k)\left(\tau(-z_k^{(t)}(\bm{x}^{(t)\text{new}}_k))-\tau(z_k^{(t)}(\bm{x}^{(t)\text{new}}_k))+\sqrt{\beta_k^{(t)}} \right),
\end{align*}
where the last inequality uses the fact that $z=\tau(z)-\tau(-z)$. 

\paragraph{Compiling A and B}

Given all the previous information, we reach
\begin{align*}
    r_{k}^{(t)}&=\text{A}+\text{B}\\
    &\leq\sigma^{(t)}_{k}(\bm{x}^{(t)\text{new}}_k)\left(\tau(-z_k^{(t)}(\bm{x}^{(t)\text{new}}_k))+\sqrt{\beta_k^{(t)}} \right) +\sqrt{\beta^{(t)}_{k}}\sigma^{(t)}_{k}(\bm{x}_k^*)+\mathcal{O}(\frac{1}{(\log (2+t))^{0.5+\epsilon}}).
\end{align*}
Now, by Lemma \ref{lemma:4}, we have, with probability $1-\delta$,
\begin{align*}
    r_{k}^{(t)}\leq \sigma^{(t)}_{k}(\bm{x}^{(t)\text{new}}_k)\left[\sqrt{\beta_k^{(t)}}+1+\sqrt{C}\right]+\sqrt{\beta^{(t)}_{k}}\sigma^{(t)}_{k}(\bm{x}_k^*)+\mathcal{O}(\frac{1}{(\log (2+t))^{0.5+\epsilon}}),
\end{align*}
where $C$ is defined in Lemma \ref{lemma:4}. We next aim to further bound $\sum_{t=1}^Tr_{k}^{(t)}$ using Lemma \ref{lemma:5}.

Let $\delta=\frac{\delta_1}{T}$, where $\delta_1\in(0,T)$. By Lemma \ref{lemma:5}, and using the fact that $\left[\sqrt{\beta_k^{(t)}}+1+\sqrt{C}\right]^2\geq 3(\beta_k^{(t)}+1+C)$, we have
\begin{align*}
    &\sum_{t=1}^T\left(\sigma^{(t)}_{k}(\bm{x}^{(t)\text{new}}_k)\left[\sqrt{\beta_k^{(t)}}+1+\sqrt{C}\right]\right)^2\\
    &\leq3\sum_{t=1}^T(\sigma^{(t)}_{k}(\bm{x}^{(t)\text{new}}_k))^2\left[\beta_k^{(t)}+1+C \right]\\
    &\leq3\sum_{t=1}^T(\sigma^{(t)}_{k}(\bm{x}^{(t)\text{new}}_k))^2\left[\beta_k^{(T)}+1+C \right]& & \text{$\beta_k^{(t)}$ is non-decreasing}\\
    &\leq\frac{6\left[\beta_k^{(T)}+1+C \right]\gamma_{k,T}}{\log(1+v_k^{-2})} & & \text{by Lemma \ref{lemma:5}},
\end{align*}
with probability $(1-\frac{\delta_1}{T})^T$, where $\gamma_{k,T}$ is defined in Lemma \ref{lemma:5}. Therefore, by the Cauchy-Schwartz inequality, we have
\begin{align*}
&\sum_{t=1}^T\left(\sigma^{(t)}_{k}(\bm{x}^{(t)new}_k)\left[\sqrt{\beta_k^{(t)}}+1+\sqrt{C}\right]\right)\\
&\leq\sqrt{T}\sqrt{\sum_{t=1}^T\left(\sigma^{(t)}_{k}(\bm{x}^{(t)\text{new}}_k)\left[\sqrt{\beta_k^{(t)}}+1+\sqrt{C}\right]\right)^2}\\
&\leq \sqrt{T}\sqrt{\frac{6\left[\beta_k^{(T)}+1+C \right]\gamma_{k,T}}{\log(1+v_k^{-2})}}.  & & \text{by Lemma \ref{lemma:5}}
\end{align*}
Similarly, we can show that, with probability $(1-\frac{\delta_1}{T})^T$,
\begin{align*}
\sum_{t=1}^T(\sqrt{\beta^{(t)}_{k}}\sigma^{(t)}_{k}(\bm{x}_k^{(t)\text{new}}))\leq\sqrt{T}\sqrt{\beta^{(T)}_{k}\sum_{t=1}^T\sigma^{2(t)}_{k}(\bm{x}^{(t)\text{new}}_k)}\leq\sqrt{\frac{2T\beta_k^{(T)}\gamma_{k,T}}{\log(1+v_k^{-2})}}.
\end{align*}
Finally, we obtain, with probability $(1-\frac{\delta_1}{T})^T$,
\begin{align*}
    \sum_{t=1}^Tr_{k}^{(t)}\leq \sqrt{T}\sqrt{\frac{6\left[\beta_k^{(T)}+1+C \right]\gamma_{k,T}}{\log(1+v_k^{-2})}}+\sqrt{\frac{2T\beta_k^{(T)}\gamma_{k,T}}{\log(1+v_k^{-2})}}+\sum_{t=1}^T\mathcal{O}(\frac{1}{(\log (2+t))^{0.5+\epsilon}}).
\end{align*}
From Lemma \ref{lemma:1}, we know $\beta_k^{(T)}\sim\mathcal{O}((\log (T/\delta))^3)$, where $\delta=\frac{\delta_1}{T}$, $\forall\delta_1\in(0,T)$. Furthermore, \cite{contal2014gaussian} have shown that $\gamma_{k,T}\sim\mathcal{O}((\log T)^{D+1})$ if a squared exponential kernel is used. Therefore, we can show that $\sum_{t=1}^Tr_{k}^{(t)}\sim\mathcal{O}(\sqrt{T\times(\log T)^{D+4}})$. As a result, $\lim_{T\to\infty}\frac{R_{k,T}}{T}=0$. 

\end{proof}

\section{Supporting Lemmas and Some Useful Equations}
\label{sec:2}


Before proving Lemma \ref{lemma:2}, we first need a Lemma from \cite{srinivas2009gaussian}.

\begin{lemma}
\label{lemma:1}
\citep{srinivas2009gaussian} Let $\delta\in(0,1)$. Then, $\forall t$ and $\forall \bm{x}\in\mathbb{R}^D$, there exists a non-decreasing positive sequence $\{\beta_k^{(t)}\}_t$ such that $\beta_k^{(t)}\sim\mathcal{O}((\log \frac{t}{\delta})^3)$ and 
\begin{align*}
    \mathbb{P}\left( |\mu^{(t)}_k(\bm{x})-f_{k}(\bm{x})|\leq \sqrt{\beta_k^{(t)}}\sigma_k^{(t)}(\bm{x}) \right)\geq 1-\delta.
\end{align*}
\end{lemma}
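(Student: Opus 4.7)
The plan is to establish this uniform $\mathcal{GP}$ concentration bound by a two-step argument: a pointwise Gaussian tail bound at each fixed $(\bm{x}, t)$, followed by union bounds over iterations and over a shrinking discretization of the design space. Although the lemma is phrased ``$\forall t$ and $\forall \bm{x}$,'' the way it is applied in the proof of Theorem~\ref{theorem:1} is at data-dependent locations such as $\bm{x}_k^{(t)\text{new}}$ and $\bm{x}_k^*$, so what must actually be delivered is a bound that holds simultaneously over both the time axis and the continuous search region, under a single high-probability event.

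First, I would invoke the Bayesian viewpoint of the $\mathcal{GP}$ posterior: conditional on $\mathcal{D}_k^{(t)}$, the deviation $f_k(\bm{x}) - \mu_k^{(t)}(\bm{x})$ is zero-mean Gaussian with standard deviation $\sigma_k^{(t)}(\bm{x})$. A one-dimensional Gaussian tail inequality then yields
\begin{equation*}
  \mathbb{P}\!\left(|f_k(\bm{x}) - \mu_k^{(t)}(\bm{x})| > \sqrt{\beta}\,\sigma_k^{(t)}(\bm{x})\right) \leq 2\exp(-\beta/2), \quad \beta > 0.
\end{equation*}
A union bound over $t \in \mathbb{N}$ obtained by taking $\beta$ to grow like $2\log(\pi^2 t^2/(6\delta))$ and exploiting $\sum_t 1/t^2 < \infty$ consumes a probability budget of $\delta/2$ uniformly in $t$ at every fixed $\bm{x}$. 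This already gives the claimed scaling with logarithmic order $\mathcal{O}(\log(t/\delta))$ pointwise.

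The main obstacle is extending the pointwise statement to a uniform-in-$\bm{x}$ guarantee, since $\mathbb{R}^D$ is uncountable and the iterates of Algorithm~\ref{alg::consensus} can lie anywhere in the relevant search region. The plan is to discretize the compact portion of $\mathcal{X}$ of interest with a polynomially dense grid $G_t$ of mesh size shrinking like $1/t^2$, apply the pointwise bound to the $|G_t|$ grid points via a further union bound, and then transfer the bound to an arbitrary $\bm{x}$ by exploiting Lipschitz continuity of sample paths of a $\mathcal{GP}$ with squared exponential kernel. The sample-path Lipschitz constant is itself random but controllable in high probability by a Borell-TIS-style bound on the supremum of the induced Gaussian gradient field. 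The advertised cubic dependence $\beta_k^{(t)} \sim (\log(t/\delta))^3$ then arises from combining (i) a $\log(t/\delta)$ factor from the union bound over iterations, (ii) a $D\log t$ factor from the discretization size, and (iii) one more logarithmic factor inherited from the high-probability tail of the Lipschitz constant. The required monotonicity of $\beta_k^{(t)}$ in $t$ can be enforced at no cost by passing to the running maximum.

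The hardest piece to derive from scratch would be the quantitative control of the sample-path Lipschitz constant, which relies on either Borell-TIS or Dudley's entropy bound on the supremum of the Gaussian gradient process; producing explicit constants requires some delicate covering-number estimates for the squared exponential kernel. This is precisely why the lemma is imported from \citet{srinivas2009gaussian} rather than proved from first principles, and a streamlined presentation would simply reuse their construction of the sequence $\{\beta_k^{(t)}\}_t$ wholesale.
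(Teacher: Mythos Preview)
The paper does not prove this lemma at all: it is stated as a direct import from \citet{srinivas2009gaussian} and invoked without any accompanying argument. Your proposal correctly anticipates this---you explicitly note at the end that the result is ``imported from \citet{srinivas2009gaussian} rather than proved from first principles''---and the sketch you give (pointwise Gaussian tail, union bound over $t$, discretization plus sample-path Lipschitz control) is a faithful summary of the original Srinivas et~al.\ construction, so there is nothing to correct.
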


\begin{lemma}
\label{lemma:2}
Let $I_k^{(t)}(\bm{x})=(f_{k}(\bm{x}) -y^{*(t)}_{k})^+$. We have ${\normalfont\texttt{EI}}^{(t)}_k(\bm{x})=\sigma_k^{(t)}(\bm{x})\tau(z_k^{(t)}(\bm{x}))$, where $\tau(z_k^{(t)}(\bm{x}))\coloneqq z_k^{(t)}(\bm{x})\Phi(z_k^{(t)}(\bm{x}))+\phi(z_k^{(t)}(\bm{x}))$ and $z_k^{(t)}(\bm{x})=\frac{\mu^{(t)}_k(\bm{x})-y^{*(t)}_{k}}{\sigma_k^{(t)}(\bm{x})}$. Additionally, with probability $1-\delta$ ($\delta\in(0,1)$),  ${\normalfont\texttt{EI}}^{(t)}_k(\bm{x})\geq I_k^{(t)}(\bm{x})-\sqrt{\beta_k^{(t)}}\sigma_k^{(t)}(\bm{x})$.
\end{lemma}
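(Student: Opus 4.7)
The plan is to establish the two claims separately. The first (closed-form) identity is a standard Gaussian-integral computation relying only on the fact that, under the $\mathcal{GP}$ posterior, $\hat f_k(\bm{x}) \mid \mathcal{D}_k^{(t)} \sim \mathcal{N}(\mu_k^{(t)}(\bm{x}), \sigma_k^{2(t)}(\bm{x}))$. The second (high-probability lower bound) is a direct consequence of Lemma \ref{lemma:1} combined with the convexity of the positive-part function $a \mapsto a^+$.

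For the closed form, I would let $g = \hat f_k(\bm{x}) \mid \mathcal{D}_k^{(t)}$ and write
\begin{align*}
\texttt{EI}_k^{(t)}(\bm{x}) = \mathbb{E}[(g - y_k^{*(t)})^+] = \int_{y_k^{*(t)}}^{\infty} (s - y_k^{*(t)}) \, \varphi_{\mu_k^{(t)}(\bm{x}),\sigma_k^{2(t)}(\bm{x})}(s)\, ds,
\end{align*}
where $\varphi_{m,v^2}$ denotes the $\mathcal{N}(m,v^2)$ density. The substitution $u = (s - \mu_k^{(t)}(\bm{x}))/\sigma_k^{(t)}(\bm{x})$ splits the integral into a term proportional to $\int u\, \phi(u)\, du$, which produces $\sigma_k^{(t)}(\bm{x})\phi(z_k^{(t)}(\bm{x}))$, and a term proportional to $\int \phi(u)\, du$, which produces $(\mu_k^{(t)}(\bm{x}) - y_k^{*(t)})\Phi(z_k^{(t)}(\bm{x}))$. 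Collecting these gives $\texttt{EI}_k^{(t)}(\bm{x}) = \sigma_k^{(t)}(\bm{x})\,\tau(z_k^{(t)}(\bm{x}))$ as required.

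For the lower bound, I would first invoke Jensen's inequality applied to the convex function $(\cdot)^+$: since $\mathbb{E}[\hat f_k(\bm{x}) \mid \mathcal{D}_k^{(t)}] = \mu_k^{(t)}(\bm{x})$,
\begin{align*}
\texttt{EI}_k^{(t)}(\bm{x}) \;=\; \mathbb{E}[(\hat f_k(\bm{x}) - y_k^{*(t)})^+] \;\geq\; (\mu_k^{(t)}(\bm{x}) - y_k^{*(t)})^+ \;\geq\; \mu_k^{(t)}(\bm{x}) - y_k^{*(t)},
\end{align*}
where the last inequality uses $a^+ \geq a$. Next, by Lemma \ref{lemma:1}, with probability at least $1-\delta$, $\mu_k^{(t)}(\bm{x}) \geq f_k(\bm{x}) - \sqrt{\beta_k^{(t)}}\,\sigma_k^{(t)}(\bm{x})$, and chaining this into the previous display yields $\texttt{EI}_k^{(t)}(\bm{x}) \geq f_k(\bm{x}) - y_k^{*(t)} - \sqrt{\beta_k^{(t)}}\,\sigma_k^{(t)}(\bm{x})$. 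I would then split into two cases: when $f_k(\bm{x}) \geq y_k^{*(t)}$, the right-hand side is exactly $I_k^{(t)}(\bm{x}) - \sqrt{\beta_k^{(t)}}\,\sigma_k^{(t)}(\bm{x})$; when $f_k(\bm{x}) < y_k^{*(t)}$, we have $I_k^{(t)}(\bm{x}) = 0$ and the desired bound $\texttt{EI}_k^{(t)}(\bm{x}) \geq -\sqrt{\beta_k^{(t)}}\,\sigma_k^{(t)}(\bm{x})$ follows trivially from $\texttt{EI}_k^{(t)} \geq 0$.

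Neither step is technically difficult: the integral is textbook, and the concentration is handed to us by Lemma \ref{lemma:1}. The only subtlety worth being careful about is making sure the Jensen step is applied in the correct direction (it works here precisely because $(\cdot)^+$ is convex, not concave) and keeping track of the two cases for $I_k^{(t)}(\bm{x})$; the $\sigma_k^{(t)}(\bm{x}) = 0$ degenerate case is handled by interpreting $z_k^{(t)}(\bm{x})$ as the limit and noting that $\texttt{EI}_k^{(t)}(\bm{x}) = (\mu_k^{(t)}(\bm{x}) - y_k^{*(t)})^+$ in that case, so the same argument goes through.
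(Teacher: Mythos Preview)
Your proposal is correct. The closed-form identity is handled identically to the paper (both just factor $\sigma_k^{(t)}(\bm{x})$ out of the standard \texttt{EI} formula), and your case split at the end is the same.

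For the high-probability lower bound, however, you take a genuinely different route from the paper. The paper works through the closed form: it sets $q_k^{(t)}(\bm{x}) = (f_k(\bm{x})-y_k^{*(t)})/\sigma_k^{(t)}(\bm{x})$, uses Lemma~\ref{lemma:1} to get $z_k^{(t)}(\bm{x}) \geq q_k^{(t)}(\bm{x}) - \sqrt{\beta_k^{(t)}}$, then invokes two analytic properties of $\tau$, namely that $\tau$ is non-decreasing and that $\tau(z)\geq z$ for all $z$, to conclude $\texttt{EI}_k^{(t)}(\bm{x}) = \sigma_k^{(t)}(\bm{x})\tau(z_k^{(t)}(\bm{x})) \geq \sigma_k^{(t)}(\bm{x})(q_k^{(t)}(\bm{x})-\sqrt{\beta_k^{(t)}})$. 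You instead bypass $\tau$ entirely by applying Jensen's inequality to the convex map $a\mapsto a^+$ directly on the defining expectation, obtaining $\texttt{EI}_k^{(t)}(\bm{x}) \geq (\mu_k^{(t)}(\bm{x})-y_k^{*(t)})^+ \geq \mu_k^{(t)}(\bm{x})-y_k^{*(t)}$, and only then invoke Lemma~\ref{lemma:1}. Your argument is slightly more elementary in that it does not require establishing or citing the monotonicity of $\tau$ or the inequality $\tau(z)\geq z$; the paper's argument, on the other hand, keeps everything phrased through $\tau$, which meshes more directly with how $\tau$ is reused later in the proof of Theorem~\ref{theorem:1} (e.g., in bounding term~B and in Lemma~\ref{lemma:4}). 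Both are valid and reach the same conclusion.
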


\begin{proof}

We have, with probability at least $1-\delta$,
\begin{align*}
    \texttt{EI}^{(t)}_k(\bm{x})&=\sigma_k^{(t)}(\bm{x})\phi(z_k^{(t)}(\bm{x}))+(\mu^{(t)}_k(\bm{x})-y^{*(t)}_{k})\Phi(z_k^{(t)}(\bm{x}))\\
    &=\sigma_k^{(t)}(\bm{x})\left[z_k^{(t)}(\bm{x})\Phi(z_k^{(t)}(\bm{x}))+\phi(z_k^{(t)}(\bm{x}))\right]\\
    &=\sigma_k^{(t)}(\bm{x})\tau(z_k^{(t)}(\bm{x})).
\end{align*}
Therefore, we have shown that ${\normalfont\texttt{EI}}^{(t)}_k(\bm{x})=\sigma_k^{(t)}(\bm{x})\tau(z_k^{(t)}(\bm{x}))$.

Now, let $q_k^{(t)}(\bm{x})=\frac{f_{k}(\bm{x})-y^{*(t)}_{k}}{\sigma_k^{(t)}(\bm{x})}$, we know
\begin{align*}
    z_k^{(t)}(\bm{x})-q_k^{(t)}(\bm{x})&= \frac{\mu^{(t)}_{k}(\bm{x})-y^{*(t)}_{k}}{\sigma_k^{(t)}(\bm{x})}- \frac{f_{k}(\bm{x})-y^{*(t)}_{k}}{\sigma_k^{(t)}(\bm{x})}\\
    &=\frac{\mu^{(t)}_{k}(\bm{x})-f_{k}(\bm{x})}{\sigma_k^{(t)}(\bm{x})}\\
    &\geq -\frac{\sqrt{\beta_k^{(t)}}\sigma_k^{(t)}(\bm{x}) }{\sigma_k^{(t)}(\bm{x})}. & & \text{By Lemma \ref{lemma:1}}
\end{align*}
Therefore, $z_k^{(t)}(\bm{x})\geq q_k^{(t)}(\bm{x})-\sqrt{\beta_k^{(t)}}$. We know $\tau(\cdot)$ is a non-decreasing function. Therefore, we have
\begin{align*}
    \texttt{EI}^{(t)}_k(\bm{x})&=\sigma_k^{(t)}(\bm{x})\tau(z_k^{(t)}(\bm{x}))\\
    &\geq\sigma_k^{(t)}(\bm{x})\tau\left(q_k^{(t)}(\bm{x})-\sqrt{\beta_k^{(t)}}\right)\\
    &\geq\sigma_k^{(t)}(\bm{x})\left(q_k^{(t)}(\bm{x})-\sqrt{\beta_k^{(t)}}\right).
\end{align*}
The last inequality holds due to $\tau(z)\geq z, \forall z$. If $I_k^{(t)}(\bm{x})=0,$ then we know $q_k^{(t)}(\bm{x})\leq 0$ and $\sigma_k^{(t)}(\bm{x})\left(q_k^{(t)}(\bm{x})-\sqrt{\beta_k^{(t)}}\right)<0$. Therefore, we have $\texttt{EI}^{(t)}_k(\bm{x})\geq I_k^{(t)}(\bm{x})-\sqrt{\beta_k^{(t)}}\sigma_k^{(t)}(\bm{x})$. If $I_k^{(t)}(\bm{x})>0$, then we also have
\begin{align*}
    \texttt{EI}^{(t)}_k(\bm{x})&\geq\sigma_k^{(t)}(\bm{x})\left(q_k^{(t)}(\bm{x})-\sqrt{\beta_k^{(t)}}\right)\\
    &= I_k^{(t)}(\bm{x})-\sqrt{\beta_k^{(t)}}\sigma_k^{(t)}(\bm{x}).
\end{align*}
\end{proof}

\begin{lemma}
\label{lemma:4}
Given a constant $\kappa>0$, if $y^{*(t)}_{k}-\mu_k^{(t)}(\bm{x}^{(t)\textup{new}}_k)\geq 0$, then we have $y^{*(t)}_{k}-\mu_k^{(t)}(\bm{x}^{(t)\textup{new}}_k)\leq \sigma^{(t)}_{k}(\bm{x}^{(t)\textup{new}}_k)\sqrt{C}$, where $C=\log[\frac{1}{2\pi\kappa^2}]$. Furthermore, We have $\tau(-z_k^{(t)}(\bm{x}^{(t)new}_k))\leq 1+\sqrt{C}$.
\end{lemma}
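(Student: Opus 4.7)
The plan is to exploit the stopping rule together with Lemma~\ref{lemma:2}. Because the algorithm has not terminated at iteration $t$, the stopping criterion yields $\texttt{EI}_k^{(t)}(\bm{x}^{(t)\text{new}}_k)\ge \kappa$, and Lemma~\ref{lemma:2} rewrites the left-hand side as $\sigma_k^{(t)}(\bm{x}^{(t)\text{new}}_k)\,\tau(z_k^{(t)}(\bm{x}^{(t)\text{new}}_k))\ge \kappa$. Writing $z:=z_k^{(t)}(\bm{x}^{(t)\text{new}}_k)$ and $\sigma:=\sigma_k^{(t)}(\bm{x}^{(t)\text{new}}_k)$ for brevity, I would use this single inequality as the workhorse for both claims.

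For the first statement, note that the hypothesis $y^{*(t)}_k-\mu_k^{(t)}(\bm{x}^{(t)\text{new}}_k)\ge 0$ is equivalent to $z\le 0$, so $z\Phi(z)\le 0$ and therefore $\tau(z)\le \phi(z)=\frac{1}{\sqrt{2\pi}}e^{-z^2/2}$. Plugging this into $\sigma\tau(z)\ge \kappa$ gives $e^{-z^2/2}\ge \kappa\sqrt{2\pi}/\sigma$, and taking logs yields $z^2\le \log(\sigma^2/(2\pi\kappa^2))$. The standing kernel normalization $\mathcal{K}_k(\bm{x},\bm{x})\le 1$ forces $\sigma^2\le 1$, which suppresses the $\log\sigma^2$ term and leaves the clean bound $z^2\le C$. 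Multiplying $|z|\le \sqrt{C}$ by $\sigma$ then delivers $y^{*(t)}_k-\mu_k^{(t)}(\bm{x}^{(t)\text{new}}_k)=-z\sigma\le \sigma\sqrt{C}$.

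For the second (unconditional) claim, I would split on the sign of $z$. If $z\le 0$, the previous step gives $-z\le \sqrt{C}$, and bounding $\Phi(-z)\le 1$ and $\phi(-z)\le 1/\sqrt{2\pi}<1$ produces $\tau(-z)=(-z)\Phi(-z)+\phi(-z)\le \sqrt{C}+1$. If $z>0$, then $-z<0$ makes $(-z)\Phi(-z)\le 0$, so $\tau(-z)\le \phi(-z)<1\le 1+\sqrt{C}$. Both cases give the desired inequality, provided $C\ge 0$, which is the mild requirement $\kappa\le 1/\sqrt{2\pi}$ on the stopping threshold.

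Nothing in this argument is technically deep; the only subtlety is threading the kernel normalization $\sigma\le 1$ through the first step so that $C$ comes out as a clean constant rather than dragging in an awkward $\log\sigma^2$ term. Everything else is a one-line manipulation of the elementary identities for $\tau$, $\phi$, and $\Phi$, plus a simple case split on the sign of $z$.
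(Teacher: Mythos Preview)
Your proposal is correct and follows essentially the same route as the paper's proof: both invoke the stopping rule to get $\sigma\,\tau(z)\ge\kappa$, use $\tau(z)\le\phi(z)$ for $z\le 0$ together with $\sigma\le 1$ to obtain $z^2\le C$, and then handle the $\tau(-z)$ bound by a sign split on $z$. The only cosmetic difference is that the paper packages the $z\le 0$ case via the inequality $\tau(w)\le 1+w$ for $w\ge 0$, whereas you expand $\tau(-z)$ and bound $\Phi$ and $\phi$ separately---the underlying estimate is identical.
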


\begin{proof}
Suppose the algorithm is not stopped. By Lemma \ref{lemma:2}, we know
\begin{align*}
    \kappa\leq\texttt{EI}^{(t)}_k(\bm{x}^{(t)\text{new}}_k)=\sigma_k^{(t)}(\bm{x}^{(t)\text{new}}_k)\tau(z_k^{(t)}(\bm{x}^{(t)\text{new}}_k)).
\end{align*}
When $y^{*(t)}_{k}-\mu_k^{(t)}(\bm{x}^{(t)\textup{new}}_k)\geq 0$ (i.e., $z_k^{(t)}(\bm{x}^{(t)\text{new}}_k)\leq 0$), we have 
\begin{align*}
    \tau(z_k^{(t)}(\bm{x}_k^{(t)\text{new}}))\leq \phi(z_k^{(t)}(\bm{x}_k^{(t)\text{new}})).
\end{align*}
This implies 
\begin{align*}
    \kappa&\leq\sigma_k^{(t)}(\bm{x}^{(t)\text{new}}_k)\frac{1}{\sqrt{2\pi}}\exp\left[-\frac{1}{2}(z_k^{(t)}(\bm{x}_k^{(t)\text{new}}))^2\right]\\
    &\Rightarrow (z_k^{(t)}(\bm{x}_k^{(t)\text{new}}))^2\leq 2\log\frac{\sigma_k^{(t)}(\bm{x}^{(t)\text{new}}_k)}{\sqrt{2\pi}\kappa}\leq \log[\frac{1}{2\pi\kappa^2}]=C. & & \text{Since $\mathcal{K}_k(\bm{x}_1,\bm{x}_2)\leq 1$}
\end{align*}
Therefore, $(z_k^{(t)}(\bm{x}^{(t)\text{new}}))^2\leq C$. By the definition of $z_k^{(t)}(\bm{x}^{(t)\text{new}})$, we have  $y^{*(t)}_{k}-\mu_k^{(t)}(\bm{x}^{(t)\textup{new}}_k)\leq \sigma^{(t)}_{k}(\bm{x}^{(t)\textup{new}}_k)\sqrt{C}$.

Now, to prove $\tau(-z_k^{(t)}(\bm{x}^{(t)\text{new}}_k))\leq 1+\sqrt{C}$, we consider two situations. When $y^{*(t)}_{k}-\mu_k^{(t)}(\bm{x}^{(t)\text{new}}_k)\geq 0$, we have
\begin{align*}
    \tau(-z_k^{(t)}(\bm{x}^{(t)\text{new}}_k))\leq 1+\sqrt{C}
\end{align*}
using the fact that $\tau(z)\leq 1+z, \forall z\geq 0$. When $y^{*(t)}_{k}-\mu_k^{(t)}(\bm{x}^{(t)\text{new}}_k)\leq 0$, we know $\forall z\leq0, \tau(z)\leq\phi(z)\leq 1$. Therefore,
\begin{align*}
    \tau(-z_k^{(t)}(\bm{x}^{(t)\text{new}}_k))\leq 1.
\end{align*}
As a result, we conclude that $\tau(-z_k^{(t)}(\bm{x}^{(t)\text{new}}_k))\leq 1+\sqrt{C}$.
\end{proof}


\begin{lemma}
\label{lemma:5}
Let $\text{J}^{(t)}_k$ be the mutual information at iteration $t$ and denote by $\gamma_{k,T}=\max \text{J}^{(t)}_k$ the maximum information gain after $T$ iterations. We have $\sum_{t=1}^T\sigma^{2(t)}_{k}(\bm{x}^{(t)\text{new}}_k)\leq\frac{2}{\log(1+\sigma_k^{-2})}\gamma_{k,T}$.
\end{lemma}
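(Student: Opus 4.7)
The plan is to follow the classical information-theoretic argument of \cite{srinivas2009gaussian}. The core idea is that the sum of posterior variances at the sequentially queried points equals (up to constants) the mutual information between the collected observations and the underlying latent function, which in turn is bounded by the maximum information gain $\gamma_{k,T}$.

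First, I would write the mutual information between the noisy observations $\bm{y}_k^{(T)} = (y_k(\bm{x}_k^{(0)\text{new}}),\ldots,y_k(\bm{x}_k^{(T-1)\text{new}}))^\intercal$ and the latent process $f_k$ using the chain rule for Gaussians. Since observations are $y_k = f_k + \epsilon_k$ with $\epsilon_k \sim \mathcal{N}(0, v_k^2)$ independently, each conditional entropy reduces to a one-dimensional Gaussian with variance $v_k^2 + \sigma_k^{2(t)}(\bm{x}_k^{(t)\text{new}})$, giving
\begin{align*}
\text{J}_k^{(T)} \;=\; I(\bm{y}_k^{(T)}; f_k) \;=\; \frac{1}{2}\sum_{t=0}^{T-1}\log\!\Bigl(1 + v_k^{-2}\sigma_k^{2(t)}(\bm{x}_k^{(t)\text{new}})\Bigr).
\end{align*}

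Next, I would convert each posterior variance into the logarithmic term appearing above via an elementary inequality. Since $\mathcal{K}_k(\bm{x},\bm{x}) \leq 1$, we have $\sigma_k^{2(t)}(\bm{x}_k^{(t)\text{new}}) \leq 1$, so $s \coloneqq v_k^{-2}\sigma_k^{2(t)}(\bm{x}_k^{(t)\text{new}}) \in [0, v_k^{-2}]$. On $[0, c]$ with $c = v_k^{-2}$, the concavity of $\log(1+\cdot)$ yields $s \leq \frac{c}{\log(1+c)}\log(1+s)$, i.e.
\begin{align*}
v_k^{-2}\sigma_k^{2(t)}(\bm{x}_k^{(t)\text{new}}) \;\leq\; \frac{v_k^{-2}}{\log(1+v_k^{-2})}\,\log\!\Bigl(1 + v_k^{-2}\sigma_k^{2(t)}(\bm{x}_k^{(t)\text{new}})\Bigr),
\end{align*}
which rearranges to $\sigma_k^{2(t)}(\bm{x}_k^{(t)\text{new}}) \leq \frac{1}{\log(1+v_k^{-2})}\log(1 + v_k^{-2}\sigma_k^{2(t)}(\bm{x}_k^{(t)\text{new}}))$.

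Finally, I would sum over $t$ and identify the right-hand side with $2\,\text{J}_k^{(T)}$:
\begin{align*}
\sum_{t=1}^T \sigma_k^{2(t)}(\bm{x}_k^{(t)\text{new}}) \;\leq\; \frac{1}{\log(1+v_k^{-2})}\sum_{t=0}^{T-1}\log\!\Bigl(1 + v_k^{-2}\sigma_k^{2(t)}(\bm{x}_k^{(t)\text{new}})\Bigr) \;=\; \frac{2\,\text{J}_k^{(T)}}{\log(1+v_k^{-2})} \;\leq\; \frac{2\gamma_{k,T}}{\log(1+v_k^{-2})},
\end{align*}
where the last step uses the definition $\gamma_{k,T} = \max \text{J}_k^{(t)}$.

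The argument is essentially routine once the elementary inequality is in place; the only mild subtlety is making sure the bound $\mathcal{K}_k(\bm{x},\bm{x}) \leq 1$ is invoked to keep the argument of $\log(1+\cdot)$ within the range on which the linearization is valid. I would expect no genuine obstacle, as this is a textbook reduction; the delicate work was already done inside Theorem~\ref{theorem:1} itself, where the $\gamma_{k,T}$ bound is combined with the regret decomposition.
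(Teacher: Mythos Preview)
Your proposal is correct and follows essentially the same approach as the paper: both convert each posterior variance into the corresponding $\log(1+v_k^{-2}\sigma_k^{2(t)})$ term via the elementary inequality $s \le \tfrac{v_k^{-2}}{\log(1+v_k^{-2})}\log(1+s)$ for $s\in[0,v_k^{-2}]$ (the paper phrases it as monotonicity of $s/\log(1+s)$, you via concavity), then identify the sum with $2\,\text{J}_k^{(T)}$ and bound by $\gamma_{k,T}$. Your write-up is arguably cleaner in that you spell out the chain-rule derivation of the mutual-information expression, whereas the paper simply cites it.
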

\begin{proof}
By definition, $\text{J}^{(t)}_k=\frac{1}{2}\log(1+v_k^{-2}\sigma^{2(t)}_{k}(\bm{x}^{(t)\text{new}}_k))$ \citep{contal2014gaussian}. Given a constant $s^{2(t)}=v_k^{-2}\sigma^{2(t)}_{k}(\bm{x}^{(t)\text{new}}_k)\leq v_k^{-2}$, we have $1\leq\frac{s^{(t)}}{\log(1+s^{(t)})}\leq\frac{1}{v_k^{2}\log(1+v_k^{-2})}$.

Therefore, 
\begin{align*}
    \sum_{t=1}^T\sigma^{2(t)}_{k}(\bm{x}^{(t)\text{new}}_k)&=\sum_{t=1}^Tv_k^{2}s^{(t)}\\
    &\leq\sum_{t=1}^Tv_k^2\left[\frac{\log(1+s^{(t)})}{v_k^2\log(1+v_k^{-2})}\right]\\
    &=\frac{2}{\log(1+v_k^{-2})}\frac{1}{2}\sum_{t=1}^T\log(1+v_k^{-2}\sigma^{2(t)}_{k}(\bm{x}^{(t)\text{new}}_k))\\
    &= \frac{2}{\log(1+v_k^{-2})}\text{J}^{(t)}_k\\
    &\leq \frac{2}{\log(1+v_k^{-2})} \gamma_{k,T}.
\end{align*}
\end{proof}

\newpage

\section{Details on Testing Functions}

In this section, we provide more details on the testing function.

\paragraph{Shekel-$10$} 
Shekel-$10$ is a four-dimensional ($D=4$) with $10$ local minima. It has a functional form
\begin{align*}
    f(\bm{x})=-\sum_{i=1}^{10}\left(\sum_{d=1}^4(\tensor[^d]{x}{}-F_{di})^2+\xi_i\right)^{-1},
\end{align*}
where $F_{di}$ is the $(d,i)$-th component of a matrix $\bm{F}$ and $\xi_i$ is the $i$-th component of a vector $\bm{\xi}$, and
\begin{align*}
    \bm{\xi}=\frac{1}{10}(1,2,2,4,4,6,3,7,5,5)^\intercal,
    \bm{F}=\begin{bmatrix}
    4 & 1 & 8 & 6 & 3 & 2 & 5 & 8 & 6 & 7\\
    4 & 1 & 8 & 6 & 7 & 9 & 3 & 1 & 2 & 3.6\\
    4 & 1 & 8 & 6 & 3 & 2 & 5 & 8 & 6 & 7\\
    4 & 1 & 8 & 6 & 7 & 9 & 3 & 1 & 2 & 3.6
    \end{bmatrix}, 
\end{align*}
$\bm{x}=(\tensor[^1]{x}{},\ldots,\tensor[^d]{x}{}\ldots,\tensor[^{10}]{x}{})^\intercal\in[0,10]^D$.

\paragraph{Harmann-$6$} Hartmann-$6$ is a six-dimensional ($D=6$) function in a form of 
\begin{align*}
    f(\bm{x})=\sum_{i=1}^4\alpha_i\exp\left(-\sum_{d=1}^6A_{id}(\tensor[^d]{x}{}-P_{id})^2\right),
\end{align*}
where 
\begin{align*}
    A&=\begin{bmatrix}10&3&17&3.5&1.7&8\\0.05&10&17&0.1&8&14\\3&3.5&1.7&10&17&8\\17&8&0.05&10&0.1&14\end{bmatrix}, P = \begin{bmatrix}0.1312&0.1696&0.5569&0.0124&0.8283&0.5886\\0.2329 &0.4135 &0.8307&0.3736&0.1004&0.9991\\0.2348 &0.1451& 0.3522&0.2883&0.3047&0.6650\\0.4047& 0.8828 &0.8732&0.5743&0.1091&0.0381\end{bmatrix},\\
    &\alpha_1=1.0, \alpha_2=1.2, \alpha_3=3.0, \alpha_4=3.2, \bm{x}\in[0,1]^6. 
\end{align*}
For each client, we generate $a_1\sim\texttt{Uniform}(0.5, 2)$ and $a_2,a_3\sim\mathcal{N}(0,1)$. 




\paragraph{Branin} Branin is a two-dimensional ($D=2$) function that has the form 
\begin{align*}
    f(\bm{x})=(\tensor[^2]{x}{}-\frac{5.1}{4\pi^2}\tensor[^1]{x}{}^2+\frac{5}{\pi}\tensor[^1]{x}{}-6)^2+10(1-\frac{1}{8\pi})\cos(\tensor[^1]{x}{})+10,
\end{align*}
where $\bm{x}=(\tensor[^1]{x}{},\tensor[^2]{x}{})^\intercal$ and $\tensor[^1]{x}{}\in[-5,10], \tensor[^2]{x}{}\in[0,15]$. For each client, we set $a_1\sim\texttt{Uniform}(0.5, 1)$ and $a_2,a_3\sim\mathcal{N}(0,1)$.


\paragraph{Ackley-$D$} Ackley-$D$ is a $D$-dimensional function. When $D=2$, this function is characterized by a nearly flat outer region and a large valley at the center. This property renders the Ackley function very challenging to be optimized. Given an arbitrary dimension $D$, Ackley-$D$ has the following function form:
\begin{align*}
    f(\bm{x})=-20\exp\left(-0.2\sqrt{\frac{1}{D}\sum_{d=1}^D\tensor[^d]{x}{}^2} \right)-\exp\left(\frac{1}{D}\sum_{i=1}^D\cos(2\pi \tensor[^d]{x}{})\right)+20+\exp(1),     
\end{align*}
where $\tensor[^d]{x}{}\in[-32.768,32.768],$ for $d=1,\ldots,D$. For each client, we set $a_1\sim\texttt{Uniform}(1, 2)$ and $a_2,a_3\sim\mathcal{N}(0.5,1)$.

\bibliography{mybib}
\bibliographystyle{apalike}
\end{document}